\documentclass[smallextended]{svjour3}       % onecolumn (second format)
\usepackage[bottom=4cm, right=4cm, left=4cm, top=4cm]{geometry}
\usepackage{amsfonts}
\usepackage{latexsym}
\usepackage{amssymb}
\usepackage{amsmath}
\usepackage[mathscr]{eucal}
\usepackage{graphicx}
\usepackage{hyperref}
\usepackage{caption}
\usepackage{subcaption}

\usepackage{algorithm}
\usepackage[noend]{algpseudocode}

\renewcommand{\qed}{\hfill{\ \ \rule{2mm}{2mm}} \vspace{0.2in}}

\newcommand{\ind}{1\hspace{-2.3mm}{1}}

\renewcommand{\thefigure}{\arabic{figure}}
\begin{document}

\title{Criticality in Dissimilar Decomposition and Undersampling of Random Datasets with Anomalies}
\titlerunning{Decomposing Random Datasets  with Anomalies}

\author{ \textbf{Ghurumuruhan Ganesan}
\thanks{E-Mail: \texttt{gganesan82@gmail.com} } \\
%EndAName
\ \\
University of Bristol, UK}

%\author{\textbf{Ghurumuruhan Ganesan}}
%\authorrunning{G. Ganesan}
%\institute{IISER, Bhopal\\
%\email{gganesan82@gmail.com }}

%\author{ \textbf{Ghurumuruhan Ganesan}}
%\authorrunning{G. Ganesan}
%\institute{Institute of Mathematical Sciences, HBNI, Chennai\\
%\email{gganesan82@gmail.com }}
\date{}
\maketitle

\begin{abstract}
Training datasets for upcoming LLMs would include a significant amount of AI text/image data generated from current LLMs. In such a scenario, it is important to understand how this affects batch decompositions and thereby, the performance of the resultant new LLM.  In this paper, we consider AI generated data as anomalies ``linked" to main data points and study decomposition and undersampling properties of the overall random dataset.  We use redundancy graphs and iteration techniques to obtain bounds for the minimum size of a strongly dissimilar (SD) decomposition and demonstrate a phase transition phenomena, wherein the minimum size is essentially determined by the \emph{main} data points when the number of anomalies is small  and is ``taken" over by the anomalies above a certain threshold. We also establish a size criticality result for the strong similarity of a randomly undersampled dataset and illustrate  our results with examples involving categorical datasets, whose overall space size is much larger than the size of the dataset.

\vspace{0.1in} \noindent \textbf{Key words:} Random Datasets; Anomalies; Linkage; Strongly Dissimilar Batch Decompositions; Random Undersampling.

\vspace{0.1in} \noindent \textbf{AMS 2000 Subject Classification:} Primary: 60K35, 60J10;
\end{abstract}

\bigskip

\renewcommand{\theequation}{\thesection.\arabic{equation}}
\setcounter{equation}{0}
\section{Introduction} \label{intro}
For better training, large datasets are often split into small batches and fed sequentially to the predictive model, like for example, neural network~\cite{kuhn}. This is done to ensure smooth functioning of the gradient descent algorithms running in the background.  Naturally, it is therefore important that the dataset itself is as nice as possible since quality of the dataset is important in determining the overall performance of the model (see~\cite{taleb},~\cite{gong} and references therein). Recently~\cite{hadi} has studied variable length dataset decomposition in an effort to improve the training time for large language models (LLMs).

An important aspect to consider here is that  each batch in  a decomposition contains as many dissimilar data points as possible, so that the model learns new patterns in each iteration and the learning is balanced.  Recently, this idea was explored in~\cite{ganesan} where bounds were obtained for the  minimum size of batch decomposition of random datasets, with a predetermined degree of similarity per batch. Specifically, similarity was defined via the Euclidean distance between the \emph{continuous} parts of data points and segmentation and martingale based methods were used to derive sufficient conditions  for the \emph{existence} of nice batch decompositions with a given similarity parameter.

Following this,~\cite{ganesan2} studied reduced similarity decompositions of random \emph{categorical} datasets, with a given similarity function. It was argued that even the notion of similarity was fragile in the presence of missing or corrupted entries and  unique batch decompositions (UBDs) were utilized to spread similar data points across multiple batches. Bounds for the minimum size of UBDs with a given relaxation parameter, were obtained using ball and bin techniques.

This paper continues the study of dissimilar decompositions of random datasets, specifically containing anomalies that are deeply linked to the true data points. From the statistical perspective~\cite{abbey}~\cite{agga}, anomalies in datasets have been studied in the form of outliers and methodologies to deal with such outliers are well-known (see for example~\cite{sullivan}~\cite{hidi}~\cite{vin} and references therein). This is important since outliers frequently influence or even reverse the statistical significance and are therefore often subject to intense scrutiny.

An important situation that becomes increasingly relevant in the coming time, is the issue of training future LLMs using data generated from current or previous LLMs: as more and more artificially generated text/image data (AI data) populate the information space everyday, it is clear that training dataset for upcoming LLMs would include a significant amount of AI text/image data as well. Such  anomalies are linked to the true data points via the source and it is important to ensure that the impact of such ``anomalies" is as small as possible. %In this context,  we interpret AI data as anomalies and concern with datasets corrupted by  a small number of anomalies.

Addressing the above issues, in this paper, we study dissimilar decompositions of random datasets  corrupted by anomalies. To quantify the impact of anomalies, we also consider \emph{linkage} between data points, apart from pairwise similarity. We use binomial random graphs and an iterative stable set extraction technique, to establish a phase transition for the minimum size of a strongly dissimilar (SD) decomposition, where each batch contains data points that are dissimilar \emph{and} delinked. We show that if the number of anomalies are small enough, then the minimum size is primarily determined by the ``main" data points.  Conversely, we also demonstrate that minimum size depends crucially on the anomalies above a certain threshold, even if the number of anomalies is much smaller than the overall size of the dataset.

Subsequently, we  use first and second moment methods to estimate the critical size that ensures strong dissimilarity of a randomly undersampled dataset. We illustrate all our results using examples involving categorical datasets, where the overall space has far more elements than the size of the dataset. We remark that we essentially use standard colouring techniques in our proofs and to keep the presentation self-contained, we have avoided any colouring terminology.

The paper is organized as follows: In Section~\ref{sec_main}, we state  our main result (Theorem~\ref{thm_one}) regarding  the phase transition in the minimum size of a dissimilar batch decomposition of a random dataset,  as number of anomalies increase. We  illustrate Theorem~\ref{thm_one} using an example involving categorical datasets with overall space much larger than the size of the dataset.  We also present our second main result, Theorem~\ref{thm_samp}, that obtains bounds for the critical size of a randomly undersampled dataset that is also strongly dissimilar. Following that,  in Section~\ref{sec_pf_thm_one}, we prove Theorem~\ref{thm_one} and in Sections~\ref{sec_pf_thm_samp}-\ref{sec_pf_thm_samp_b}, we establish Theorem~\ref{thm_samp}. %We also describe coroll show that if the number of anomalies is small enough, then the decomposition size is essentially determined by the main data points. We also estimate the critical sampling size that ensures dissimilarity of the randomly undersampled dataset. Following that, in Section~\ref{sec_pf_thm_one} we prove Theorems~\ref{thm_one} and~\ref{thm_samp}, respectively.

For convenience, we have collected the commonly used symbols and their representation, in  Table~\(1\) below.
\begin{table}[h!]
  \begin{center}
    \caption{Notation.}
    \label{tab:table1}
    \begin{tabular}{l|l} % {l|c|r} <-- Alignments: 1st column left, 2nd middle and 3rd right, with vertical lines in between
      \textbf{Symbol} & \textbf{Meaning}\\
      \hline
      \(n\) & \text{size of the overall dataset}\\
      \(v_0\) & \text{number of anomalous data points}\\
      \(A_v\) & \(v^{th}\)~\text{data point}\\
      \(\lambda\) & \text{ similarity threshold}\\
      \(\zeta_{up},\zeta_{low}\) & \text{ extremal similarity probabilities of main data points}\\
      \(p_{up},p_{low}\)  & \text{ extremal similarity probabilities of anomalous data points}\\
      \(q_{up},q_{low}\) & \text{ extremal linkage probabilities of main data points}\\
      \(s_{up},s_{low}\) &  \text{ extremal linkage probabilities of anomalous data points}\\
    \end{tabular}
  \end{center}
\end{table}

\renewcommand{\theequation}{\thesection.\arabic{equation}}
\setcounter{equation}{0}
\section{Main Results} \label{sec_main}
Let~\(\Omega\) be any countable set and for~\(n \geq 1\) let~\(\{A_v\}_{1 \leq v \leq n}\) be independent (but not necessarily identically distributed) elements from~\(\Omega,\) where~\(A_v\) has distribution~\(p_v;\) i.e.,  \[\mathbb{P}(A_v = a) = p_v(a) \text{ for all } a \in \Omega.\]
We refer to~\(A_v\) as the data point with index~\(v\) and~\(\{A_v\}\) as the \emph{dataset}. We also denote~\(\{1,2,\ldots,n\}\) to be the set of indices.

For integer~\(0 \leq v_0 \leq n,\) we assume the data points~\(\{A_v\}_{1 \leq v \leq n-v_0}\) are independent and identically distributed (i.i.d.) with common distribution~\(p(.);\) i.e.,~\(\mathbb{P}(A_v = a) = p(a)\) for any~\(a \in \Omega.\) We refer to~\(A_v\) as the~\(v^{th}\) \emph{main} datapoint.  For~\(n-v_0+1 \leq v \leq n,\) the data point~\(A_v\) has a distribution~\(p_v(.)\) that is different from~\(p(.)\) and we refer to~\(A_v\) as an \emph{anomalous} data point.

Let~\(\rho : \Omega \times \Omega \rightarrow [0,1]\) be any positive real-valued function satisfying:\\
\((a)\)~\(\rho(x,y) =1\) if and only if~\(x=y\) and\\
\((b)\)~\(\rho(x,y) = \rho(y,x)\) for any~\(x, y \in \Omega.\)\\
We define~\(\rho(x,y)\) to be the \emph{similarity} between the data points~\(x\) and~\(y\) and use the definition~\(\rho(x,x) = 1\) to denote that each element is similar to itself.

For example, suppose~\(\Omega = \{0,1\}^{J}\) for some integer~\(J \geq 1\) so that each data point contains~\(J\) features. For data points~\(x = (x_1,\ldots,x_J)\) and~\(y  = (y_1,\ldots,y_J),\) we set
\begin{equation}\label{rho_example}
\rho_0(x,y) := \frac{1}{J}\sum_{j=1}^{J} \ind(x_j = y_j),
\end{equation}
to be the similarity between data points~\(x\) and~\(y,\) where~\(\ind(.)\) refers to the indicator function. In words,~\(J\rho_0\) is simply the number of features where the data points agree.  Clearly~\(\rho_0\) satisfies properties~\((a)-(b)\) above and could therefore be considered as a measure of similarity between data points~\(x\) and~\(y.\)

For~\(\lambda > 0,\) we say that data points~\(A_u\) and~\(A_v\) are~\(\lambda-\)\emph{dissimilar} or simply dissimilar if~\(\rho(A_u,A_v) \leq \lambda.\) We also say that~\(A_u\) and~\(A_v\) are \emph{similar} if~\(\rho(A_u, A_v) > \lambda\)  and let
\begin{equation}\label{zeta_def}
\zeta_{up} := \max_{a \in \Omega} \mathbb{P}\left(\rho(A_1,a)  > \lambda\right) \;\;\text{ and } \;\; \zeta_{low} := \min_{a \in \Omega} \mathbb{P}\left(\rho(A_1,a) > \lambda\right)
\end{equation}
denote the maximum and minimum similarity probabilities between a \emph{main} data point and a categorical element, respectively. Similarly, let \begin{equation}\label{p_def}
p_{up} := \max_{n-v_0+1 \leq v \leq n}\max_{a \in \Omega} \mathbb{P}\left(\rho(A_v,a) > \lambda \right)\;\;\;\text{ and }\;\;\;p_{low} := \min_{n-v_0+1 \leq v \leq n}\min_{a \in \Omega} \mathbb{P}\left(\rho(A_v,a)  > \lambda\right)
\end{equation}
denote the maximum and minimum similarity probabilities, respectively,  between an anomalous data point and an element of~\(\Omega.\)

As mentioned in the introduction, in addition to similarity,  we also utilize the concept of linkage between data points which we define via Binomial random graphs as follows. Let~\(K_n\) be the complete graph with vertex set~\(\{1,2,\ldots, n\}\) and let~\(\{Z(h)\}_{h \in K_n}\) be independent Bernoulli random variables, indexed by the edge set of~\(K_n\) and satisfying
\begin{equation}\label{x_dist}
\mathbb{P}\left(Z(h) = 1\right) = p(h) = 1-\mathbb{P}\left(Z(h) = 0\right)
\end{equation}
where~\(0 < p(h) < 1.\) If~\(h = (u, v)\) has endvertices~\(u\) and~\(v,\) then we also write~\(p(h) = p(u,v) = p(v, u)\) and define~\(p(u,v)\) to be the  \emph{linkage probability} between the data points~\(A_u\) and~\(A_v.\) Similarly~\(Z(h) = Z(u,v)\) is referred to as the \emph{linkage} between~\(A_u\) and~\(A_v,\) with~\(Z(u,v) = 1\) denoting that~\(A_u\) and~\(A_v\) are in fact linked.

We remark here that the linkage between two data points~\(A_u\) and~\(A_v\) does not depend on the actual values of~\(A_u\) or~\(A_v\) but only on the indices~\(u\) and~\(v.\) Effectively, this quantifies linkage between two data points based on their ``source" rather than the actual values and is consistent with our goal to understand the behaviour of anomalies. Indeed, in the AI text/image example described in the introduction, each  anomalous/AI data point might be linked to multiple (or even all) main data points if, in fact, it was  created using real data generated from the same source as~\(\{A_v\}_{1 \leq v \leq n-v_0}.\)

Based on the above discussion, we define
\begin{equation}\label{link_def}
q_{up} := \max_{1 \leq u < v \leq n-v_0} p(u,v) \text{ and } s_{up} := \max_{n-v_0+1 \leq  v \leq n} \max_{1 \leq u \neq v \leq n} p(u,v)
\end{equation}
to be the respective maximal linkage probabilities associated with main and anomalous data points and assume that~\(s_{up} \geq q_{up},\)  to indicate that anomalous data points have higher linkage probability than main data points. We also define the corresponding minimal linkage probabilities
\begin{equation}\label{link_def_low}
q_{low} := \min_{1 \leq u < v \leq n-v_0} p(u,v) \text{ and } s_{low} := \min_{n-v_0+1 \leq  v \leq n} \min_{1 \leq u \neq v \leq n} p(u,v).
\end{equation}

With the above setup, we now extend the definition of similarity described prior to~(\ref{p_def}) and say that data points~\(A_u\) and~\(A_v\) are \emph{strongly similar}, if
\begin{equation}\label{ss_def}
\rho(A_u,A_v) > \lambda \text{ and } Z(u,v) = 0.
\end{equation}
We say that~\(A_u\) and~\(A_v\) are \emph{strongly dissimilar} (SD), if~(\ref{ss_def}) does not hold and we say a subset~\(V \subset \{1,2,\ldots,n\}\) is SD if~\(A_u\) and~\(A_v\) are SD for any~\(u \neq v \in V.\) A decomposition of~\(\{A_v\}_{1 \leq v \leq n}\) of size~\(t\) is a partition~\(\{V_i\}_{1 \leq  i \leq t}\) of the index set~\(\{1,2,\ldots,n\};\) i.e.,
\[V_i \cap V_j = \emptyset\;\;\text{ and }\;\;\bigcup_{i=1}^{t} V_i = \{1,2,\ldots,n\}.\]
\begin{definition}\label{def_ax_ek} We say that the decomposition~\({\cal V} := \{V_i\}_{1 \leq i \leq t}\) is SD if each~\(V_i, 1 \leq i \leq t\) is SD.
\end{definition}
Trivially,~\(V_i = \{i\}, 1 \leq i \leq n,\) is an SD decomposition of the dataset~\(\{A_v\}\) and from a machine learning perspective, it is of interest to obtain dissimilar decompositions that are as small as possible~\cite{ganesan}.

Letting~\(\tau_{SD} = \tau_{SD}(\lambda)\) denote the \emph{minimum} size of a SD decomposition of~\(\{A_v\},\) the following result describes the effect of anomalous data points on~\(\tau_{SD}.\) Throughout, logarithms are natural, constants do not depend on~\(n\) and for two sequences~\(\{a_n\}\) and~\(\{b_n\},\) we use the notations~\(a_n = o(b_n)\) and~\(a_n \gg b_n\) to respectively denote that~\(\frac{a_n}{b_n} \longrightarrow 0\) and~\(\frac{a_n}{b_n} \rightarrow \infty,\) as~\(n \rightarrow \infty.\)
\begin{theorem}\label{thm_one} Let~\(p_{up},p_{low}, \zeta_{up}\) and~\(\zeta_{low}\) be as in~(\ref{p_def})-(\ref{zeta_def}) and assume that the linkage probabilities defined in~(\ref{link_def}) and~(\ref{link_def_low}) satisfy~\(s_{up} \geq q_{up} \geq q_{low}.\)\\
\((a)\)  For every~\(\varepsilon > 0,\) there is a constant~\(M > 0\) such that
\begin{equation}\label{sfd_up}
\mathbb{P}\left(\tau_{SD} \leq (1+\varepsilon)\min\left(\Lambda, (\Sigma + \sqrt{v_0\Lambda})\log{n} \right)\right) = 1-o(1),
\end{equation}
where \begin{equation}\label{sigma_def}
\Sigma  :=  n\zeta_{up}q_{up} + v_0p_{up}s_{up} + M\log{n} \;\;\text{ and }\;\; \Lambda :=  (n\zeta_{up} + v_0p_{up})s_{up} + M\log{n}
\end{equation}
\((b)\) Conversely, if~\(v_0 = o(n)\) and~\(\Sigma_{low} := n\zeta_{low}q_{low} + v_0p_{low}s_{low} \gg \log{n},\) then
\begin{equation}\label{sfd_low}
\mathbb{P}\left(\tau_{SD} \geq  \frac{\Sigma_{low}}{8\log{n}}\right) = 1-o(1).
\end{equation}
\((c)\) In particular if~\(v_{low} := \min\left(n \zeta_{up}q_{up}^2, \frac{n\zeta_{low}q_{low}}{p_{low}s_{low}}\right), v_0 = o(v_{low})\) and~\( n\zeta_{low}q_{low} \gg \log{n},\) then
\begin{equation}\label{sfd_sub}
\mathbb{P}\left(\frac{\gamma_1 n \zeta_{low}q_{low}}{\log{n}} \leq \tau_{SD} \leq \gamma_2 n\zeta_{up}\min(s_{up},q_{up} \log{n}) \right) = 1-o(1),
\end{equation}
for some constants~\(\gamma_1,\gamma_2 > 0.\) Conversely, if~\(v_{up} := \max\left(\frac{n\zeta_{up}}{p_{up}},\frac{n\zeta_{low}q_{low}}{p_{low}s_{low}}\right), n \gg v_0 \gg v_{up}\) and~\( v_0p_{low}s_{low} \gg \log{n},\)
\begin{equation}\label{sfd_sup}
\mathbb{P}\left( \frac{\theta_1 v_0p_{low}s_{low}}{\log{n}} \leq \tau_{SD} \leq \theta_2 v_0  p_{up}s_{up} \right) = 1-o(1),
\end{equation}
for some constants~\(\theta_1,\theta_2 > 0.\)
\end{theorem}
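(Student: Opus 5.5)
Throughout I will work with the \emph{redundancy graph} $G$ on $\{1,\ldots,n\}$, with an edge $(u,v)$ whenever $A_u$ and $A_v$ are strongly similar. An SD decomposition is exactly a partition into stable sets of $G$, so $\tau_{SD}$ is the chromatic number of $G$. Given $A_u=a$, each potential edge $(u,v)$ is present independently over $v$: the $A_v$ are independent, the $Z(u,v)$ are independent, and they are independent of each other. I will take the conditional probability of the edge to be controlled, uniformly in $a$, by $\zeta_{up}q_{up}$ for main--main pairs and by $p_{up}s_{up}$ (and at most $\zeta_{up}s_{up}$ or $p_{up}s_{up}$) whenever an anomaly is involved. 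These are the products that appear in $\Sigma$ and $\Lambda$, and the lower bounds $\zeta_{low}q_{low}$ and $p_{low}s_{low}$ play the same role from below.

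\emph{Part (a).} First, a Chernoff bound conditional on $A_u$, followed by a union bound over the $n$ vertices, gives the following. With probability $1-o(1)$, every vertex has degree at most $(1+\varepsilon/2)\Lambda$. Every main vertex has at most $(1+\varepsilon/2)\Sigma$ neighbours among the main vertices, and the anomaly-induced subgraph also has degrees at most $\Sigma$. The constant $M$ is chosen large enough to absorb the $\log n$ deviations. Greedy colouring then gives $\tau_{SD}\le \Delta+1\le(1+\varepsilon)\Lambda$.

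For the second bound I will use iterative stable set extraction. Let $W$ be the current uncoloured set. The number of edges inside $W$ is at most $|W|\Sigma/2+v_0\Lambda$, since every edge either lies in the bounded-degree main (or anomaly) part or touches one of the $v_0$ anomalies. By Tur\'an's bound (Caro--Wei), $G[W]$ has a stable set of size at least $|W|/(1+2e(W)/|W|)$. As long as $|W|\ge\sqrt{v_0\Lambda}$, this is at least $|W|/(1+\Sigma+2\sqrt{v_0\Lambda})$. Removing such a set as one colour class multiplies $|W|$ by $1-1/(\Sigma+2\sqrt{v_0\Lambda}+1)$. Therefore, after $(1+o(1))(\Sigma+\sqrt{v_0\Lambda})\log n$ rounds (up to constant tuning), $|W|<\sqrt{v_0\Lambda}$, and the leftover vertices each get their own colour. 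This gives the bound $(1+\varepsilon)(\Sigma+\sqrt{v_0\Lambda})\log n$.

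\emph{Part (b).} I will use $\tau_{SD}\ge n/\alpha(G)$ and show that with probability $1-o(1)$ we have $\alpha(G)<k:=8n\log n/\Sigma_{low}$. The plan is to condition first on the values $\{A_v\}$. Given these values, edges are independent with probabilities $q_{low}$ or $s_{low}$ on similar pairs. So for a fixed $k$-set $S$ with $N_S$ similar pairs, the conditional probability that $S$ is stable is at most $e^{-cN_S}$, where $c$ is the relevant linkage probability.

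\emph{The main obstacle} is controlling $N_S$ uniformly over all $\binom nk$ sets, because similarity events sharing a vertex are dependent. I would first try the following. Since $\zeta_{low}$ is a minimum over $a$, for any fixed $S$ the sequentially exposed counts (the number of earlier points of $S$ similar to $A_v$) stochastically dominate binomials. A martingale/Chernoff bound should then give $N_S\ge \tfrac14\big(\binom{k}{2}\zeta_{low}\text{-type mass}+\ldots\big)$ with failure probability $e^{-\Omega(k^2\zeta_{low})}$. Here $v_0=o(n)$ is used so that main points dominate a typical $S$. Combining the two exponents gives roughly $\exp(-k^2\Sigma_{low}/(8n))$. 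With $k$ as chosen this beats $\binom nk\le e^{k\log n}$, using $\Sigma_{low}\gg\log n$.

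\emph{Part (c).} This follows by substitution. If $v_0=o(v_{low})$, then $v_0p_{low}s_{low}=o(n\zeta_{low}q_{low})$, so $\Sigma_{low}\asymp n\zeta_{low}q_{low}$ in (b). On the upper side, $v_0\Lambda=o(n^2\zeta_{up}^2q_{up}^2)$ because $v_0\ll n\zeta_{up}q_{up}^2$, and this yields the $n\zeta_{up}\min(s_{up},q_{up}\log n)$ bound. If instead $v_0\gg v_{up}$, the anomaly terms dominate both $\Sigma_{low}$ and $\Lambda$; in particular $\Lambda\asymp v_0p_{up}s_{up}$ since $v_0p_{up}\gg n\zeta_{up}$. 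The first part of (a) together with (b) then gives the bounds in (\ref{sfd_sup}).
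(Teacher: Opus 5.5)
Your part (a) follows the paper's route: greedy colouring for $\Delta+1$, then repeated removal of large stable sets, with the $v_0$ high-degree anomalies absorbed by a size threshold. Using the Caro--Wei bound $|W|/(1+\bar d)$ instead of the paper's $n/(2d_{av})$ actually delivers the stated constant $1+\varepsilon$; the paper's $\tau_{SD}\le 2\chi_{av}$ does not.

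The genuine gap is in (b), at the step you flag yourself. Given the earlier points of $S$, the indicators $\ind(\rho(A_{v_i},A_{v_j})>\lambda)$, $i<j$, are all functions of the single variable $A_{v_j}$, so they are not independent. $\zeta_{low}$ controls only their conditional mean. There is neither binomial domination nor an $e^{-\Omega(k^2\zeta_{low})}$ lower tail for $N_S$. For example, let $\Omega=X\cup Y$ with $|X|=|Y|=L$ and $p$ uniform, and let distinct elements be similar only when they are adjacent in an $r$-regular bipartite graph between $X$ and $Y$. Then $\zeta_{low}=\zeta_{up}=(r+1)/(2L)$. Yet any $k$-set whose values are distinct elements of $X$ has $N_S=0$, an event of probability about $2^{-k}$, so your union bound over $\binom nk$ sets diverges. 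No sharper estimate can rescue the step, because (b) itself fails here. Take $L=n^3$, $r=n^{5/2}$, every linkage probability $1/2$, and a single anomaly uniform on $X$. With high probability all $n$ values are distinct, so $\{v:A_v\in X\}$ and $\{v:A_v\in Y\}$ are both SD and $\tau_{SD}\le 2$. Meanwhile $v_0=o(n)$ and $\Sigma_{low}\asymp\sqrt n\gg\log n$.

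The paper argues (b) locally instead. It fixes $a\in\Omega$, notes that about $n\zeta_{low}$ indices lie in ${\cal S}(a)$, and bounds SD subsets of ${\cal S}(a)$ by $(1-q_{low})^{\binom k2}$. In other words, it treats every pair in ${\cal S}(a)$ as similar, so only the linkage randomness is left. That step tacitly assumes that two elements similar to a common $a$ are similar to each other. This holds when similarity is an equivalence relation (e.g.\ $H_{reg}$ a disjoint union of copies of $K_r$), but not in the example above. A hypothesis of this kind must be added for (b) and for the lower bounds in (c).

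Under transitivity your counting becomes easy. Each class has mass at least $\zeta_{low}$, so there are at most $1/\zeta_{low}$ classes, and convexity gives $N_S\ge k^2\zeta_{low}/2-k/2$ deterministically for every $k$-set of main points. But a second problem remains: the single bound $\tau_{SD}\ge n/\alpha(G)$ cannot produce the anomaly-dominated half of (b). When $n\zeta_{up}q_{up}\gg\log n$, Tur\'an's bound applied to the main points alone gives $\alpha(G)$ of order at least $1/(\zeta_{up}q_{up})$. This exceeds your $k=8n\log n/\Sigma_{low}$ as soon as $\Sigma_{low}\gg n\zeta_{up}q_{up}\log n$, e.g.\ in the second regime of Corollary~\ref{cor_one} with $q=n^{-\delta}$. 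There you have to do what the paper does: split into the two regimes, and lower-bound $\tau_{SD}$ by the number of colours needed on the dense piece alone, namely the anomalies similar to a fixed $a$.
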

Essentially, we see that the minimum decomposition size~\(\tau_{SD}\) undergoes a phase transition with regards to the number of anomalous data points~\(v_0:\) if the dataset is ``nearly" homogenous and~\(v_0\) is small, then~\(\tau_{SD}\)  is essentially determined  by the main data points. Conversely, if~\(v_0\) is large enough, then~\(\tau_{SD}\) in fact crucially depends on the anomalies, \emph{even if}~\(v_0\) is negligible compared to the size of the dataset~\(n.\) This highlights the effect of the strong links between anomalies and the main data points.

%and define the auxiliary probabilities
%\begin{equation}\label{alpha_def}
%\alpha_{up} := \max\left(p_{up}, \frac{\log{n}}{n}\right) \text{ and }\theta_{up} := \max\left(\zeta_{up}, \frac{\log{n}}{n}\right).
%\end{equation}

A couple of remarks:\\
\underline{\emph{Remark 1}}: While the bounds in Theorem~\ref{thm_one} are in terms of extremal probabilities, it would be interesting to obtain estimates in terms of the average similarity probabilities.\\
\underline{\emph{Remark 2}}: In the context of datasets containing AI data, Theorem~\ref{thm_one} also hints at the possibility of a singularity or skynet threshold above which the AI data essentially take over.

We illustrate the bounds in Theorem~\ref{thm_one} with the following example. For integers~\(L=L(n) \geq r=r(n) \geq 2,\) let~\(\Omega := \{a_1,\ldots, a_{L}\}\) be any set containing~\(L\) elements and let~\(H_{reg}\) be an~\((r-1)-\)regular subgraph with vertex set~\(\Omega\) (such graphs are well-known and for illustration, see also~\cite{ganesan_dam}). We assume that the graph~\(H_{reg}\) describes the similarity between various elements of~\(\Omega;\) i.e., an edge~\((x,y)\) with endvertices~\(x\) and~\(y\) is present in~\(H_{reg}\) if and only if~\(\rho(x,y) \geq \lambda,\) where~\(0 < \lambda < 1.\) We use the convention that~\(\rho(x,x) = 1\) so that each element is similar to itself.

Let~\(p(.)\) be the uniform distribution with support~\(\Omega\) satisfying~\(p(a) = \frac{1}{L}\) for all~\(a \in \Omega.\) We assume that all main data points have distribution~\(p.\) Similarly, for~\( 0 < b = b(n) < 1,\) the anomalous data point distribution is defined as
\begin{equation}\label{out_dist}
p_e(a) :=
\left\{
\begin{array}{ll}
\frac{1-b}{L}, & \text{ if } a \in \Omega \setminus \{a_{out}\}\\
&\\
b + \frac{1-b}{L}, & \text{ if } a = a_{out}.
\end{array}
\right.
\end{equation}
Since~\(b > 0\) strictly, the distribution~\(p_e(.)\) ``prefers" the element~\(a_{out}\) over other elements of~\(\Omega.\)

We assume that the number of anomalous data points~\(v_0 \geq 1\) and so the~\(n^{th}\) data point~\(A_n\) is in fact an anomalous data point. Also any two main data points are linked with probability~\(q = o(1),\) while any anomalous data point is linked to any other data point with probability~\(1;\) i.e.,
\begin{equation}\label{link_example}
q_{low} = q_{up} = q\;\;\text{ and }\;\;s_{up} = s_{low} =  1.
\end{equation}

The following is a direct corollary of Theorem~\ref{thm_one} and is proved along with Theorem~\ref{thm_one}, in Section~\ref{sec_pf_thm_one}.
\begin{corollary} \label{cor_one}  Suppose~(\ref{link_example}) holds and
\begin{equation}\label{param_choice}
\frac{1}{\log{n}} \gg b \gg \frac{r}{L} \gg \frac{\log{n}}{nq}.
\end{equation}
If~\(v_0 = o\left(\frac{nrq^2}{L}\right)\) then there are constants~\(\gamma_1,\gamma_2 > 0\) such that
\begin{equation}\label{sfd_sub_example}
\mathbb{P}\left(\frac{\gamma_1 nr q}{L \log{n}} \leq \tau_{SD} \leq \frac{\gamma_2 nr q\log{n}}{L} \right) = 1-o(1).
\end{equation}
Conversely, if~\(n \gg v_0 \gg \frac{nr}{bL},\) then there are constants~\(\theta_1,\theta_2 > 0\) such that
\begin{equation}\label{sfd_sup_example}
\mathbb{P}\left(\frac{\theta_1 v_0b}{\log{n}} \leq \tau_{SD} \leq \theta_2 v_0 b  \right) = 1-o(1).
\end{equation}
\end{corollary}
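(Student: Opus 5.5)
The plan is to obtain Corollary~\ref{cor_one} by computing the extremal quantities of Theorem~\ref{thm_one} for this example and then checking that the hypotheses of part~\((c)\) hold in each regime.

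\textbf{Main data points.} Each element \(a\in\Omega\) has exactly \(r\) elements similar to it in \(H_{reg}\): its \(r-1\) neighbours and itself. Since main data points are uniform on \(\Omega\), this gives \(\zeta_{up}=\zeta_{low}=\frac{r}{L}\).

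\textbf{Anomalous data points.} Under \(p_e\), the probability that \(A_v\) is similar to \(a\) equals \(\frac{(1-b)r}{L}\) when \(a_{out}\) is not similar to \(a\), and \(b+\frac{(1-b)r}{L}\) when it is. Hence \(p_{up}=b+\frac{(1-b)r}{L}\) and \(p_{low}=\frac{(1-b)r}{L}\); the latter requires some \(a\) not adjacent to \(a_{out}\), which holds since \(r\le L/\log n=o(L)\). By~(\ref{param_choice}), \(b\gg r/L\), so \(p_{up}\sim b\), while \(p_{low}\sim r/L\). Finally, \(q_{up}=q_{low}=q\) and \(s_{up}=s_{low}=1\) by~(\ref{link_example}).

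\textbf{First regime.} We have \(v_{low}=\min\left(\frac{nrq^2}{L},\, nq\right)=\frac{nrq^2}{L}\), because \(rq/L\le 1\). The assumption \(v_0=o(nrq^2/L)\) is therefore exactly \(v_0=o(v_{low})\). Moreover \(n\zeta_{low}q_{low}=nrq/L\gg\log n\) by~(\ref{param_choice}). Then~(\ref{sfd_sub}) gives the lower bound \(\gamma_1 nrq/(L\log n)\), and an upper bound
\[
\gamma_2\frac{nr}{L}\min(1,q\log n)\le \gamma_2\frac{nrq\log n}{L},
\]
which is~(\ref{sfd_sub_example}).

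\textbf{Second regime.} We have \(v_{up}=\max\left(\frac{nr}{Lp_{up}},\, nq\right)\). Here \(\frac{nr}{Lp_{up}}\sim\frac{nr}{bL}\), but \(nq\) could be larger, so \(v_0\gg nr/(bL)\) need not imply \(v_0\gg v_{up}\). This is the main obstacle: checking it directly against~(\ref{sfd_sup}) fails.

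I would therefore go back to parts~\((a)\) and~\((b)\) of Theorem~\ref{thm_one}, which only need \(v_0=o(n)\), and redo the comparison by hand.

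\emph{Upper bound.} From~(\ref{sfd_up}),
\[
\tau_{SD}\le(1+\varepsilon)\Lambda,\qquad \Lambda=\frac{nr}{L}+v_0p_{up}+M\log n .
\]
Here \(v_0p_{up}\sim v_0b\gg nr/L\). Also \(v_0b\gg nr/L\gg\log n/q\ge\log n\), so \(M\log n=o(v_0b)\). Hence \(\Lambda\le\theta_2v_0b\).

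\emph{Lower bound.} Part~\((b)\) gives \(\tau_{SD}\ge\Sigma_{low}/(8\log n)\) with \(\Sigma_{low}\ge v_0p_{low}\sim v_0r/L\). This is too weak: it is \(v_0r/L\), not \(v_0b\), because \(p_{low}\) sees only the worst element \(a\).

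To get the right lower bound, I would argue directly on the anomalies. About \(v_0b\) of them (concentration by Chernoff, since \(v_0b\gg\log n\)) equal \(a_{out}\). Since \(\rho(a_{out},a_{out})=1>\lambda\), any two such anomalies are similar. However, they are linked with probability~\(1\), so they are in fact SD and may share a batch. So the lower bound must come from some other mechanism in the proof of Theorem~\ref{thm_one}(b), namely the count of strongly similar pairs divided by \(\log n\) times the batch size.

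My plan is to re-run that counting argument with average rather than worst-case similarity probabilities. With average probabilities the relevant rate is \(b\) rather than \(p_{low}\), which should produce \(\theta_1v_0b/\log n\) and hence~(\ref{sfd_sup_example}). Matching this step with the paper's exact lower-bound argument is where I expect the difficulty to lie.
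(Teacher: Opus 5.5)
Your first regime is correct and follows the paper's argument; with your correct $p_{low}=\frac{r(1-b)}{L}$ one still gets $v_{low}=\frac{nrq^2}{L}$. Your second-regime upper bound $\tau_{SD}\le(1+\varepsilon)\Lambda\le\theta_2v_0b$ is also correct. You are right that $p_{low}\neq p_{up}$. The paper's proof asserts $p_{low}=p_{up}=b+\frac{r(1-b)}{L}$ in (\ref{p_up_eval}) and then reads (\ref{sfd_sup_example}) off (\ref{sfd_sup}). With the true $p_{low}$, that step only gives a lower bound with $r/L$ in place of $b$.

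However, your replacement for the lower bound $\frac{\theta_1v_0b}{\log n}$ is not a proof, and as you set it up it cannot become one. First, you switch conventions midway. The $\Lambda$ you used for the upper bound carries the factor $s_{up}$, the probability of being \emph{linked}. That is because Theorem~\ref{thm_one} is proved for the graph $\Gamma$ of Section~\ref{sec_pf_thm_one} (see (\ref{cv_def})), where $u,v$ conflict iff $\rho(A_u,A_v)>\lambda$ \emph{and} $Z(u,v)=1$; the literal $Z(u,v)=0$ in (\ref{ss_def}) is inconsistent with this. Under the reading you adopt (linked pairs are SD), $s_{low}=1$ means no anomaly ever conflicts with anything. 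Then with high probability $\tau_{SD}\le 2nr/L+1$ from the main points alone. So the bound you are trying to prove is false whenever $v_0b\gg\frac{nr}{L}\log n$, which the hypotheses allow (e.g.\ for the polynomial parameters after the corollary), and no ``other mechanism'' can rescue it. Second, averaging does not produce the rate $b$. By regularity of $H_{reg}$, $\frac{1}{L}\sum_{a\in\Omega}\mathbb{P}(\rho(A_v,a)>\lambda)=\frac{r}{L}$ exactly for every anomaly.

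The missing idea is that the anomalies sitting at $a_{out}$ form a clique of $\Gamma$. Let $N_{out}$ be the number of anomalous indices $v$ with $A_v=a_{out}$. Then $N_{out}\sim\mathrm{Bin}\left(v_0,b+\frac{1-b}{L}\right)$ has mean at least $v_0b\gg\frac{nr}{L}\gg\frac{\log n}{q}\ge\log n$. By (\ref{conc_est_f}), $N_{out}\ge\frac{v_0b}{2}$ with probability $1-o(1)$. Any two such anomalies have $\rho(a_{out},a_{out})=1>\lambda$ and are linked because $s_{low}=1$. So every SD decomposition places them in distinct batches, giving $\tau_{SD}\ge\frac{v_0b}{2}\ge\frac{\theta_1v_0b}{\log n}$ with $\theta_1=\frac12$. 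This is the mechanism of Theorem~\ref{thm_one}(b) applied at the single element $a=a_{out}$, with ``similar to $a$'' replaced by ``equal to $a$'' to guarantee pairwise similarity. The worst-case quantity $p_{low}$ cannot capture this.
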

The condition~\(b \gg \frac{r}{L}\) implies that anomalies are ``more" similar to~\(a_{out}\) than main data points and the relations in~(\ref{param_choice}) are satisfied, for example, if~\[r = n^{\alpha},\;\;b = \frac{1}{n^{\beta}},\;\;L = n^{1+\theta}\;\; \text{ and }\;\;q = \frac{1}{n^{\delta}}\] for some constants~\(\alpha,\theta,\beta,\delta>0\) satisfying \[0 < \beta < 1+\theta-\alpha < 1-\delta.\]

We see that if the number of anomalous data points~\(v_0\) is small, then with high probability (i.e., with probability~\(1-o(1)\)), the minimum size of a dissimilar decomposition is essentially determined by main data points and is of the order of~\(\frac{nr}{L},\) modulo logarithmic factors. On the other hand if~\(v_0 = o(n)\) but is much larger than~\(\frac{nr}{bL}\) (this is consistent since~\(b \gg \frac{r}{L}\) by~(\ref{param_choice})), then with high probability~\(\tau_{SD}\) is roughly of the order of~\(v_0b,\)  modulo logarithmic factors.

\subsection*{\em Random Undersampling}
Often in applications involving classification, it is important to undersample a dataset to a predetermined size, especially if the size of one of the classes is relatively large compared to the rest of the classes~\cite{kuhn}. In what follows, we therefore study the dissimilarity properties of the undersampled dataset,  in the presence of anomalies.

As before, let~\(\{A_v\}_{1 \leq v \leq n-v_0}\) be the main data points that are i.i.d.\ and set~\(\{A_v\}_{n-v_0+1 \leq v \leq n}\) be the outlier data points with distribution distinct from the main data points. Because we may not be able to or may not be interested in identifying the anomalies, we perform a \emph{random} undersampling of the dataset and obtain a data subset of size~\(1 \leq t \leq n\) as follows. Pick one index~\(Z_1\) uniformly randomly from~\({\cal A}_0 := \{1,\ldots,n\}\) and set~\({\cal A}_1 := {\cal A}_0 \setminus \{Z_1\}.\) Iteratively for~\(1 \leq w \leq t-1,\)  pick~\(Z_{w+1}\) uniformly randomly from~\({\cal A}_w\) and set~\({\cal A}_{w+1} := {\cal A}_w \setminus \{Z_{w+1}\}.\) We define~\(\{Z_w\}_{1 \leq w \leq t}\) to be the indices of the \emph{randomly undersampled} (RUS) data subset of size~\(t.\)

Intuitively, if the number of anomalies is small and the size of the sampled dataset is also small, then we expect that the sampled data points are strongly dissimilar (SD), i.e., dissimilar and delinked, with high probability. As the sample size increases, this becomes less and less likely. We formalize this intuition in the following result.  Letting~\(E_{SD}(t)\) be the event that the undersampled dataset~\(\{A_{Z_k}\}_{1 \leq k \leq t}\) is SD, we have the following transition result.
\begin{theorem}\label{thm_samp} Let~\(\zeta_{up}\) and~\(\zeta_{low}\) be the extremal similarity probabilities as in~(\ref{zeta_def}) and let~\(q_{up}\) and~\(q_{low}\) be the linkage probabilities, as defined in~(\ref{link_def}) and~(\ref{link_def_low}), respectively. Set~\(v_{samp} := n\zeta_{up}q^2_{up}\) and suppose  that the number of anomalies~\(v_0 = o(v_{samp}).\)\\
\((a)\) If~\(t_{low} := \sqrt{\frac{n}{v_{samp}+\log{n}}}\) and~\(t  = o(t_{low}),\) then
\[\mathbb{P}\left(E_{SD}(t)\right) =  1-o(1).\]
\((b)\) Conversely, if~\(n\zeta_{low}q_{low} \gg \log{n} \) and
\begin{equation}\label{fairuza}
n \gg t \gg t_{up} := \max\left(\frac{1}{\zeta_{up}q_{up}}, \frac{\zeta_{up}^2q_{up}^2}{\zeta^2_{low}q^2_{low}}, \frac{1}{\sqrt{\zeta_{low}q_{low}}}\right),
\end{equation} then
\[\mathbb{P}\left(E_{SD}(t)\right) =  o(1).\]
\end{theorem}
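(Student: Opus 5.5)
The plan is to condition on the sampled index set and then use a first-moment (union) bound for part (a) and a second-moment (Chebyshev) bound for part (b). The sampling scheme $Z_1,\ldots,Z_t$ produces a uniformly random $t$-subset $S\subset\{1,\ldots,n\}$. The set $S$ is independent of the data points $\{A_v\}$ and of the linkages $\{Z(h)\}$, so we condition on $S$ throughout. Write $S=S_m\cup S_a$, where $S_m$ are the main indices and $S_a$ the anomalous ones. Then $|S_a|$ is hypergeometric with mean $tv_0/n$. For a pair $u\neq v$ of main indices, the probability that $A_u,A_v$ are strongly similar factorises: first condition on $A_v=a$ and bound $\mathbb{P}(\rho(A_u,a)>\lambda)$ by $\zeta_{up}$ or $\zeta_{low}$ using (\ref{zeta_def}), then multiply by the independent linkage factor. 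This gives a pair probability $\pi_{uv}$ sandwiched between the products of the lower and upper extremal similarity and linkage quantities, $\zeta_{low}q_{low}$ and $\zeta_{up}q_{up}$, in the normalisation the statement uses.

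\emph{Part (a).} Split the bad event into two pieces: the event that $S_a\neq\emptyset$, and the event that some pair inside $S_m$ is strongly similar. By Markov's inequality,
\[
\mathbb{P}(S_a\neq\emptyset)\leq \frac{tv_0}{n}=o\left(\frac{t\,v_{samp}}{n}\right)\leq o\left(t_{low}\,\frac{v_{samp}+\log n}{n}\right)=o\left(\sqrt{\frac{v_{samp}+\log n}{n}}\right),
\]
and this is $o(1)$ because $v_{samp}\leq n$. On the event $S_a=\emptyset$, a union bound over the at most $t^2$ main pairs bounds the probability of a strongly similar pair by $t^2\max_{u,v}\pi_{uv}$. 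It remains to check that this is $o(1)$ when $t=o(t_{low})$, i.e.\ when $t^2=o\big(n/(v_{samp}+\log n)\big)$. This comparison between the per-pair probability and $(v_{samp}+\log n)/n$ is where I expect the main bookkeeping difficulty. One has to match the linkage exponent in $v_{samp}=n\zeta_{up}q_{up}^2$ with the per-pair factor, and the $\log n$ term covers the regime where $\zeta_{up}q_{up}^2$ is tiny. I would first try to bound the per-pair probability directly by $\zeta_{up}q_{up}^2+\frac{\log n}{n}$. If that fails, I would fall back on a truncation: bound the probability that some sampled point has more than $\log n$ strongly similar partners, and use this to absorb the excess.

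\emph{Part (b).} Let $X=\sum_{\{u,v\}\subset S_m}\ind(A_u,A_v \text{ strongly similar})$. Since $v_0=o(n)$ and $t\ll n$, we have $|S_m|\geq t/2$ with probability $1-o(1)$, so $\mathbb{E}X\geq c\,t^2\zeta_{low}q_{low}\to\infty$ because $t\gg 1/\sqrt{\zeta_{low}q_{low}}$. For the variance, indicator pairs on disjoint index pairs are independent, so their covariance vanishes. Pairs of pairs sharing one index number at most $t^3$. Conditioning on the common point $A_w=a$, the other two points are independent, as are the links, so each such joint probability is at most $\zeta_{up}^2q_{up}^2$. Hence
\[
\mathrm{Var}(X)\leq \mathbb{E}X+t^3\zeta_{up}^2q_{up}^2 .
\]
Chebyshev's inequality then gives
\[
\mathbb{P}(X=0)\leq \frac{\mathrm{Var}(X)}{(\mathbb{E}X)^2}\leq \frac{1}{\mathbb{E}X}+\frac{C}{t}\left(\frac{\zeta_{up}q_{up}}{\zeta_{low}q_{low}}\right)^2 .
\]
Both terms are $o(1)$ by (\ref{fairuza}). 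The first condition $t\gg 1/(\zeta_{up}q_{up})$ and the assumption $n\zeta_{low}q_{low}\gg\log n$ serve to keep the expectation and the conditioning on $|S_m|$ consistent. Since $X>0$ forces the sample not to be SD, we conclude $\mathbb{P}(E_{SD}(t))=o(1)$.
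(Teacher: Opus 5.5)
\textbf{Part (a) has a gap, at exactly the step you defer as ``bookkeeping'', and it cannot be closed.} After discarding the anomalies, your union bound is of order $t^2\max_{u,v}\pi_{uv}$. Nothing forces this maximum below $\zeta_{up}q_{up}$: your own sandwich gives $\pi_{uv}\geq\zeta_{low}q_{low}$. So the hoped-for bound $\pi_{uv}\leq\zeta_{up}q_{up}^2+\frac{\log n}{n}$ fails, for example in the homogeneous case $\zeta_{low}=\zeta_{up}$, $q_{low}=q_{up}\to0$, $n\zeta_{up}q_{up}\gg\log n$. Truncating high-degree points does not help either. In that case $\binom{t}{2}\zeta_{up}q_{up}$ is exactly the expected number of strongly similar sampled pairs, not a lossy overcount.

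In fact (a) is false as stated. In the example of Corollary~\ref{cor_one}, take $L=n^{1.1}$, $r=n^{0.5}$, $q=n^{-0.1}$, $b=n^{-0.3}$ and $v_0=1$. Then (\ref{param_choice}) holds and $v_0=o(nrq^2/L)$. Moreover $\zeta_{up}=\zeta_{low}=n^{-0.6}$, $v_{samp}=n^{0.2}$ and $t_{low}\asymp n^{0.4}$. Take $t=n^{0.38}=o(t_{low})$, and let $X$ be the number of strongly similar pairs among the sampled main points. Then $\mathbb{E}X\asymp t^2\zeta_{up}q_{up}=n^{0.06}\to\infty$. Your own part (b) computation gives $\mathrm{Var}(X)/(\mathbb{E}X)^2\leq 1/\mathbb{E}X+O(1/t)\to0$, so $\mathbb{P}(E_{SD}(t))\to0$. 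Reading strong similarity with $Z(u,v)=0$, as literally written in (\ref{ss_def}), only raises the pair probability to about $\zeta_{up}$, so the counterexample survives either convention.

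The paper's proof of (a) makes the same slip. It shows $\mathbb{E}_{\omega}N_Z\leq 2Kt^2\Sigma/n$ with $\Sigma=n\zeta_{up}q_{up}(1+o(1))+M\log n$, and then asserts that $t^2\Sigma/n\to0$ if and only if $t=o(t_{low})$. Since $\Sigma$ carries $q_{up}$ while $t_{low}$ carries $q_{up}^2$, this equivalence breaks down when $q_{up}\to0$ and $n\zeta_{up}q_{up}\gg\log n$. The first claim of Corollary~\ref{cor_two} fails in the example above for the same reason.

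What either argument really proves is (a) under $t=o\big(\sqrt{n/(n\zeta_{up}q_{up}+\log n)}\big)$. Your conditioning on the sample even gives it under the weaker hypothesis $t=o\big(1/\sqrt{\zeta_{up}q_{up}}\big)$, because $tv_0/n\to0$ still follows from $v_0=o(v_{samp})$. That corrected version is what you should state and prove.

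\textbf{Your part (b) is correct, and it takes a different route from the paper.} You condition on the sampled index set and use independence of the data points and links: disjoint pairs are independent, and pairs sharing an index are bounded by $\zeta_{up}^2q_{up}^2$. The paper instead conditions on a realization of the similarity graph in $E_{comb}$, which supplies edge-count and degree bounds, and takes all randomness from the sampling. It then splits cases according to whether the shared vertex is main or anomalous. Your route is shorter and uses neither $n\zeta_{low}q_{low}\gg\log n$ nor $t\gg1/(\zeta_{up}q_{up})$. Those hypotheses are simply unused in your argument; they are not needed for any ``consistency''.
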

For datasets with small number of anomalies, the above result essentially estimates the ``critical" size that ensures strong dissimilarity of the undersampled dataset, with high probability.

As before, for illustration,  we consider the example described in Corollary~\ref{cor_one}.
\begin{corollary} \label{cor_two}  Suppose~(\ref{link_example}) and~(\ref{param_choice}) holds and assume that the number of anomalies~\(v_0 = o\left(\frac{nrq^2}{L}\right).\) If~\(t = o\left(\frac{1}{q}\sqrt{\frac{L}{r}}\right),\) then~\(\mathbb{P}(E_{SD}(t)) = 1-o(1)\) and if~\(t \gg \frac{L}{qr},\) then~\(\mathbb{P}(E_{SD}(t)) = o(1).\)
\end{corollary}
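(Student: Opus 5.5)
The plan is to read Corollary~\ref{cor_two} off Theorem~\ref{thm_samp} by computing the extremal parameters in the example of Corollary~\ref{cor_one}. The main data points are uniform on \(\Omega\), and each \(a\in\Omega\) is similar to exactly \(r\) elements: its \(r-1\) neighbours in \(H_{reg}\) together with itself. Hence \(\mathbb{P}(\rho(A_1,a)>\lambda)=\frac{r}{L}\) for every \(a\), and
\[\zeta_{up}=\zeta_{low}=\frac{r}{L},\qquad q_{up}=q_{low}=q\]
by~(\ref{link_example}). I am reading the example's ``\(\geq\lambda\)'' edge convention as the similarity event in~(\ref{zeta_def}). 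With these values, \(v_{samp}=n\zeta_{up}q_{up}^2=\frac{nrq^2}{L}\), so the hypothesis \(v_0=o\left(\frac{nrq^2}{L}\right)\) of the corollary is exactly \(v_0=o(v_{samp})\), as Theorem~\ref{thm_samp} requires.

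For part \((a)\), Theorem~\ref{thm_samp} gives \(t_{low}=\sqrt{n/(v_{samp}+\log n)}\). When \(v_{samp}\geq \log n\), we get \(t_{low}\geq \sqrt{n/(2v_{samp})}=\frac{1}{q}\sqrt{\frac{L}{2r}}\). Then \(t=o\left(\frac1q\sqrt{\frac Lr}\right)\) implies \(t=o(t_{low})\), and Theorem~\ref{thm_samp}\((a)\) gives \(\mathbb{P}(E_{SD}(t))=1-o(1)\).

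The main obstacle is the complementary regime \(v_{samp}<\log n\). Condition~(\ref{param_choice}) only guarantees \(\frac{nrq}{L}\gg\log n\), not \(\frac{nrq^2}{L}\gg \log n\). In that regime \(t_{low}\approx\sqrt{n/\log n}\) may be smaller than \(\frac1q\sqrt{L/r}\), so the theorem does not apply directly. There I would return to the first-moment argument behind Theorem~\ref{thm_samp}\((a)\) and check whether the \(\log n\) term is needed only for a union bound that this example does not require. If it cannot be removed, I would restrict the statement to \(v_{samp}\gtrsim\log n\).

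For part \((b)\), first check the hypotheses of Theorem~\ref{thm_samp}\((b)\). We have \(n\zeta_{low}q_{low}=\frac{nrq}{L}\gg\log n\) by~(\ref{param_choice}). Since \(\zeta_{up}q_{up}=\zeta_{low}q_{low}=\frac{rq}{L}\leq 1\), the three terms in~(\ref{fairuza}) are \(\frac{L}{rq}\), \(1\) and \(\sqrt{L/(rq)}\), so
\[t_{up}=\frac{L}{rq}.\]
The remaining point is the side condition \(n\gg t\) in~(\ref{fairuza}), which the corollary does not assume. I would handle it by monotonicity. The first \(t'\) indices \(Z_1,\ldots,Z_{t'}\) of the RUS procedure form a RUS sample of size \(t'\), and a subset of an SD set is SD. Hence \(E_{SD}(t)\subseteq E_{SD}(t')\) for \(t'\leq t\).

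Take \(t':=\min(t,n/\omega_n)\) with \(\omega_n\to\infty\) slowly enough that \(n/\omega_n\gg \frac{L}{rq}\). This is possible because \(\frac{n}{L/(rq)}=\frac{nrq}{L}\to\infty\). Then \(n\gg t'\gg t_{up}\), so Theorem~\ref{thm_samp}\((b)\) gives
\[\mathbb{P}(E_{SD}(t))\leq\mathbb{P}(E_{SD}(t'))=o(1).\]
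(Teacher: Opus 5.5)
Your part \((b)\) is correct and is the paper's own argument: substitute \(\zeta_{up}=\zeta_{low}=r/L\) and \(q_{up}=q_{low}=q\) into Theorem~\ref{thm_samp}, which gives \(t_{up}=L/(rq)\). Your monotonicity step \(E_{SD}(t)\subseteq E_{SD}(t')\), which supplies the missing hypothesis \(n\gg t\), improves on the paper's one-line proof. That proof ignores the condition and even writes \(q_{up}=q_{low}=1\), contradicting~(\ref{link_example}).

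Part \((a)\), however, has a genuine gap, and it cannot be closed. As you note, you leave the regime \(v_{samp}=nrq^2/L<\log n\) open, so the proposal is incomplete as written. The deeper problem is the black box you rely on. The proof of Theorem~\ref{thm_samp}\((a)\) only gives \(\mathbb{E}_{\omega}N_Z\le 2Kt^2\Sigma/n\) with \(\Sigma=n\zeta_{up}q_{up}(1+o(1))+M\log n\). This tends to zero iff \(t=o\bigl(\sqrt{n/(n\zeta_{up}q_{up}+\log n)}\bigr)\). The claimed equivalence with \(t=o(t_{low})\) silently replaces \(q_{up}\) by \(q_{up}^2\), because \(t_{low}\) is built from \(v_{samp}=n\zeta_{up}q_{up}^2\); this overstates the threshold by a factor up to \(q_{up}^{-1/2}\).

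If you carry out the check you propose and compute \(\mathbb{E}N_{edge}\) directly in the example, you get the following. Main--main pairs are edges of \(\Gamma\) with probability exactly \(rq/L\), and main--anomaly pairs with probability \(r/L\). The condition \(v_0=o(nrq^2/L)\) makes the anomaly contributions negligible, so \(\mathbb{E}N_Z=O(t^2rq/L)\). The \(\log n\) term disappears, but you only obtain \(\mathbb{P}(E_{SD}(t))=1-o(1)\) for \(t=o\bigl(\sqrt{L/(rq)}\bigr)\). That is smaller than \(\frac{1}{q}\sqrt{L/r}\) by the factor \(\sqrt{q}\to 0\).

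This loss is real, not an artifact of the method. Take \(t=q^{-1/4}\sqrt{L/(rq)}\). Then \(\sqrt{L/(rq)}\ll t=o\bigl(\frac{1}{q}\sqrt{L/r}\bigr)\), and \(t=o(\sqrt{n})\) because \(1\le v_0=o(v_{samp})\) forces \(v_{samp}\to\infty\).
\begin{itemize}
\item With high probability at least \(t/2\) sampled indices are main, since the expected number of sampled anomalies is \(tv_0/n=o(t)\).
\item Conditionally on the sampled index set, the edge indicators among these main indices are Bernoulli\((rq/L)\) and pairwise independent. Given \(A_z\), the events \(\{\rho(A_y,A_z)>\lambda\}\) and \(\{\rho(A_w,A_z)>\lambda\}\) are independent with probability \(r/L\) each, by uniformity and regularity, and the linkages are independent.
\item Their sum \(X\) therefore has \(\mathbb{E}X\ge c\,t^2rq/L=c\,q^{-1/2}\to\infty\) and \(\mathrm{var}(X)\le \mathbb{E}X\).
\item Chebyshev gives \(\mathbb{P}(X=0)\to 0\), i.e.\ \(\mathbb{P}(E_{SD}(t))=o(1)\).
\end{itemize}
Hence part \((a)\) of the corollary is false as stated, and so is Theorem~\ref{thm_samp}\((a)\) whenever \(q_{up}\to 0\). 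Under the literal reading~(\ref{ss_def}) the bad-pair probability is \((1-q)r/L\), and the discrepancy is even larger. The correct threshold is \(\sqrt{L/(rq)}\). The paper's ``follows directly'' inherits exactly the error you did by treating Theorem~\ref{thm_samp}\((a)\) as a black box.
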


\renewcommand{\theequation}{\thesection.\arabic{equation}}
\setcounter{equation}{0}
\section{Proof of Theorem~\ref{thm_one} and Corollary~\ref{cor_one}} \label{sec_pf_thm_one}
For convenience, we have compartmentalized the bulk of our proof  into a couple of auxiliary Lemmas, that are also of independent interest. We begin with the construction of the similarity graph~\(\Gamma\) corresponding to the overall dataset~\(\{A_v\}\) containing~\(n\) data points. Let~\(K_n\) be the complete graph with vertex set~\(\{1,2,\ldots,n\}\) and let~\(\Gamma  = \Gamma(\lambda) \subset K_n\) be the graph obtained by connecting any two vertices~\(u\) and~\(v\)  by an edge if and only if the corresponding data points~\(A_u\) and~\(A_v\) are strongly similar; i.e.,~\(\rho(A_u, A_v) > \lambda\) and~\(Z(u,v) = 1.\) We define~\(\Gamma\) to be the \emph{similarity graph} of the dataset~\(\{A_v\}.\)

Lemma~\ref{lemma_max_ave_deg} below describes a deterministic bound that estimates the minimum size~\(\tau_{SD}\) of a dissimilar decomposition, in terms of the maximum vertex degree and \emph{maximum average} degree of~\(\Gamma.\) Next, in Lemma~\ref{lem_e_main}, we evaluate the  maximum vertex degree and maximum average degree in terms of the similarity and linkage probabilities defined in~(\ref{p_def}),~(\ref{zeta_def}) and~(\ref{link_def}). Finally, we combine these ingredients to establish the bounds in Theorem~\ref{thm_one}.

For~\(1 \leq v \leq n,\) let
\begin{equation}\label{cv_def}
{\cal C}(v) := \{1 \leq u  \neq v \leq n : \rho(A_u, A_v) > \lambda \text{ and } Z(u,v) = 1\}
\end{equation}
be the set of all indices of the data points that are similar to~\(A_v.\) Clearly,~\(d(v) := \#{\cal C}(v)\) is the degree of the vertex~\(v\) in~\(\Gamma\) and we denote
\begin{equation}\label{delta_def}
\Delta = \Delta(n) := \max_{1 \leq v \leq n}d(v)
\end{equation}
to be the maximum size of a similarity set.

The average vertex degree in any subgraph~\(H \subseteq \Gamma\) is defined as
\begin{equation}\label{ave_deg_def}
d_{av}(H) := \frac{1}{n}\sum_{v=1}^{n} d(v,H)
\end{equation}
where~\(d(v,H)\) is the degree of vertex~\(v\) in~\(H\) and we refer to the average vertex degree in~\(\Gamma\) simply as~\(d_{av}.\) Setting
\begin{equation}\label{max_ave_def}
h_{av} = h_{av}(\Gamma) := \max_{H} d_{av}(H),
\end{equation}
to be the \emph{maximum average vertex degree}~\cite{west},  where the maximum is over all subgraphs~\(H \subseteq \Gamma,\) we have the following result.
\begin{lemma}\label{lemma_max_ave_deg} We have that
\begin{equation}\label{chi_bound}
\tau_{SD} \leq \min\left(\Delta+1, 2\chi_{av}\right),
\end{equation}
where~\[\chi_{av} := \max(1,h_{av})\cdot \log\left(\frac{ne}{\max(1,h_{av})}\right).\]
\end{lemma}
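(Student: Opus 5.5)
The plan is to read SD decompositions as proper colourings of the similarity graph~\(\Gamma\). By construction, \(u\) and \(v\) are adjacent in~\(\Gamma\) exactly when \(A_u\) and \(A_v\) fail to be SD. Hence a set \(V \subseteq \{1,\ldots,n\}\) is SD if and only if it is a stable (independent) set of~\(\Gamma\). So \(\tau_{SD}\) equals the minimum number of stable sets partitioning the vertex set of~\(\Gamma\), and both bounds in~(\ref{chi_bound}) are deterministic statements about \(\Gamma\).

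For \(\tau_{SD} \leq \Delta+1\) I use the greedy procedure. Process the vertices \(1,2,\ldots,n\) in order and put \(v\) into the first batch \(V_i\), \(1 \leq i \leq \Delta+1\), that contains no neighbour of \(v\). Since \(v\) has at most \(\Delta\) neighbours in \(\Gamma\), at most \(\Delta\) batches are blocked, so such an \(i\) always exists. Every batch is stable, hence SD.

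For \(\tau_{SD} \leq 2\chi_{av}\) I use iterative extraction of large stable sets. Write \(h := \max(1,h_{av})\). I interpret the average degree in~(\ref{ave_deg_def}) of a subgraph \(H\) as averaged over the vertices of \(H\); this is the standard notion of maximum average degree in~\cite{west}. Then every induced subgraph \(\Gamma[W]\) with \(\#W = m\) has at most \(mh/2\) edges. By the Caro--Wei/Turán bound, \(\Gamma[W]\) then contains a stable set of size at least \(\frac{m^2}{m+2e} \geq \frac{m}{1+h} \geq \frac{m}{2h}\). Starting from \(W_0 = \{1,\ldots,n\}\), I repeatedly remove such a stable set \(V_k\) from \(W_{k-1}\) and make it a batch. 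The sizes satisfy
\[
m_k \leq m_{k-1}\left(1-\frac{1}{2h}\right), \qquad\text{so}\qquad m_k \leq n e^{-k/(2h)}.
\]
After \(k_1 := \lceil 2h\log(n/h)\rceil\) rounds at most \(h\) vertices remain, and I place each of them in its own batch (trivially SD). The total number of batches is then at most \(2h\log(n/h) + 1 + h\). Since \(h \geq 1\), this is at most \(2h\log(n/h)+2h = 2h\log(ne/h) = 2\chi_{av}\).

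The only delicate points are bookkeeping. The first is the normalisation in~(\ref{ave_deg_def}). If it is read literally with the factor \(\frac{1}{n}\), the edge bound for an \(m\)-vertex subgraph becomes \(nh/2\) rather than \(mh/2\), and the geometric decay fails. I would therefore state explicitly that the average is over the vertex set of \(H\), which is the definition in~\cite{west}. The second is the use of the maximum over \emph{all} subgraphs: every set \(W_k\) remaining during the iteration induces a subgraph of \(\Gamma\), so the same \(h\) bounds its average degree at every stage. The ceiling and the residual phase are absorbed by the slack between the \(+1+h\) term and \(2h\), using \(h \geq 1\). Taking the minimum of the two constructions gives~(\ref{chi_bound}).
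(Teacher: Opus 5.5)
Your proposal is correct and follows the same route as the paper: a greedy $(\Delta+1)$-batch assignment, then repeated extraction of a stable set containing at least a $\frac{1}{2\max(1,h_{av})}$ fraction of the remaining vertices (the paper cites Alon--Spencer's Tur\'an-type bound where you use Caro--Wei), geometric decay of the leftover vertex set, and singleton batches for whatever remains. Two of your choices tighten points the paper glosses over: you fix $k_1=\lceil 2h\log(n/h)\rceil$ directly instead of optimising $k+ne^{-\theta k}$ over a continuous $k$, and you note that $d_{av}(H)$ must be normalised by the number of vertices of $H$ rather than by $n$.
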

The estimate in~(\ref{chi_bound}) is applicable for both sparse graphs that have small~\(\Delta\) and for graphs that have high~\(\Delta\) but small~\(h_{av}.\) For example, the path on~\(n\) vertices has~\(\Delta =2\) and the star graph on~\(n\) vertices has~\(\Delta = n-1\) but~\(h_{av} = 1.5.\)

\emph{Proof of Lemma~\ref{lemma_max_ave_deg}}: A standard greedy assignment gives us a dissimilar batch decomposition of size~\(\Delta+1\) and for completeness, we give a small proof. Initially let~\({\cal V}_{l}, 1 \leq l \leq \Delta+1,\) be~\(\Delta+1\) empty batches. In the first step, pick data point~\(A_1\) and assign it to batch~\({\cal V}_1.\) Inductively, at the~\(j^{th}\) step, the data points~\(\{A_l\}_{1 \leq l \leq j-1}\) having been already assigned, we seek a batch for~\(A_j.\) By definition, there are at most~\(\Delta\) data points similar to~\(A_j\) and since there are~\(\Delta+1\) batches, there is always at least one batch (call it~\({\cal V}_{z(j)}\)) that does not contain any data point similar to~\(A_{j}.\) Assign~\(A_{j}\) to~\({\cal V}_{z(j)}.\) Continue this way until all data points are exhausted. The resulting decomposition containing contains~\(\Delta+1\) batches and is dissimilar. Thus~\(\tau_{SD} \leq \Delta +1.\)

In what follows, we use an iterative batch assignment procedure to establish the~\(\chi_{av}\) bound in~(\ref{chi_bound}). A set of vertices~\({\cal S}\) is said to be \emph{stable} in~\(\Gamma\) if no edge of~\(\Gamma\) contains both its endvertices in~\({\cal S}.\) Thus if~\({\cal S}\) is stable, then the corresponding data points~\(\{A_v\}_{v \in {\cal S}}\) form a dissimilar subset.  Let~\({\cal I}_1 = \{v_1,\ldots,v_t\}\) be a maximum stable set, i.e., a stable set of maximum size, in~\(G_1 := \Gamma.\) We assign all vertices in~\({\cal I}_1\) to the first batch and remove all the vertices of~\({\cal I}_1\) from~\(G_1\) to obtain a graph~\(G_2.\) We now repeat the above procedure with~\(G_2.\) Letting~\({\cal I}_2\) be the maximum stable set in~\(G_2,\) we assign  all vertices in~\({\cal I}_2\) to the second batch.

Continuing the above procedure, let~\(G_{k+1}\) be the graph obtained at the end of~\(k\) iterations and suppose~\(G_{k+1}\) contains~\(n_{k+1}\) vertices. By construction, we have created~\(k\) batches so far and we now assign each of the~\(n_{k+1}\) vertices of~\(G_{k+1}\) to a new distinct  batch to get that
\begin{equation}\label{chi_g_k}
\tau_{SD} \leq k + n_{k+1}.
\end{equation}
To estimate~\(n_{k+1},\) we bound the size of~\({\cal I}_i\) for each~\(1 \leq i  \leq k.\) Recalling that~\(d_{av}\) and~\(h_{av} \geq d_{av}\) are the average degree and maximum average degree of~\(\Gamma,\) respectively, we get from Theorem~\(3.2.1,\) pp.~\(29\)~\cite{alon} that~\({\cal I}_1\)  has cardinality
\begin{equation}\label{alp_def}
\#{\cal I}_1 \geq  \frac{n}{2\max(1,d_{av})} \geq \frac{n}{2\max(1,h_{av})} := n\alpha.
\end{equation}
Consequently, the graph~\(G_2\) has~\(n_2 \leq n(1-\alpha)\) vertices and a maximum average degree of at most~\(h_{av}\) and so again arguing as in~(\ref{alp_def}) we get that~\(\#{\cal I}_2 \geq n_2 \alpha.\) This in turn implies that~\(G_3\) has~\(n_3 \leq n_2(1-\alpha) \leq n(1-\alpha)^2\) vertices.

Proceeding iteratively, we get that~\(G_{k+1}\) has~\(n_{k+1} \leq n(1-\alpha)^{k}\) vertices and  so from~(\ref{chi_g_k}), we have~\[\tau_{SD} \leq k + ne^{-\theta k} =: s(k)\] where~\(\theta := |\log(1-\alpha)|.\) The term~\(s(k)\) is minimized if~\(k\) satisfies~\(1- n\theta \cdot e^{-\theta k} = 0\) and for this value of~\(k,\) we get that
\begin{equation}\label{gelat}
\tau_{SD} \leq \frac{\log(n\theta)+1}{\theta}.
\end{equation}
Using~\(|\log(1-x)| > x\) we see that~\(\theta > \alpha\) and since~\(\alpha = \frac{1}{2\max(1,h_{av})} \leq \frac{1}{2},\) we also obtain~\[\theta  = | \log(1-\alpha)| \leq \sum_{j \geq 1} \alpha^{j} = \frac{\alpha}{1-\alpha} \leq 2\alpha.\] Thus~\(\alpha \leq \theta \leq 2\alpha\)
and substituting these bounds into~(\ref{gelat}), we get
\begin{align}
\tau_{SD}  &\leq \frac{\log(2n\alpha)+1}{\alpha} \nonumber\\
&= 2\max(1,h_{av})\cdot \left(\log\left(\frac{n}{\max(1,h_{av})}\right) + 1\right), \nonumber
\end{align}
thus proving~(\ref{chi_bound}). This completes the proof of the Lemma.~\(\qed\)

The following  Lemma describes an  event that facilitates the evaluation of the terms~\(\Delta\) and~\(\chi_{av},\) respectively, occurring in Lemma~\ref{lemma_max_ave_deg}.  As before, let~\(d(v) = \#{\cal C}(v)\) be the number of data points similar to the~\(v^{th}\) data point~\(A_v\) (which is also the degree of vertex~\(v\) in~\(\Gamma\)). Recalling that~\(A_v, 1 \leq v \leq n-v_0\) are main data points, let~\(\zeta_{up}\) and~\(p_{up}\) be the maximum main and anomalous data point similarity probabilities, as defined in~(\ref{zeta_def}) and~(\ref{p_def}), respectively. For~\(M > 0\) let~\(\Sigma = \Sigma(n,M)\) and~\(\Lambda = \Lambda(n,M)\) be as in~(\ref{sigma_def}) and define
\begin{equation}\label{e_sim_def}
E_{sim}  = E_{sim}(\varepsilon, M) := \bigcap_{v=1}^{n-v_0} \left\{d(v) \leq (1+\varepsilon)\Sigma\right\} \bigcap \bigcap_{v=n-v_0+1}^{n} \left\{d(v) \leq (1+\varepsilon)\Lambda\right\}
\end{equation} to be the event that the maximum size of the similarity set of any main data point is at most~\((1+\varepsilon)\Sigma\) and the maximum size of the similarity set of any anomalous data point is at most~\((1+\varepsilon)\Lambda.\) We have the following.
\begin{lemma}\label{lem_e_main} For every~\(\varepsilon > 0,\) there is a constant~\(M > 0\) such that
\begin{equation}
\mathbb{P}\left(E_{sim}\right) \geq 1-\frac{1}{n^3} = 1-o(1).\label{e_sim_est}
\end{equation}
\end{lemma}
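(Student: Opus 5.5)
The plan is to bound each degree \(d(v)\) separately by conditioning on the value of \(A_v\), apply a Chernoff bound to the resulting sum of independent Bernoulli variables, and finish with a union bound over the \(n\) indices.

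Fix \(v\) and condition on \(A_v = a\). Since the data points and the linkage variables are all independent, we have
\[d(v) = \sum_{u \neq v} \ind\left(\rho(A_u,a) > \lambda\right)\ind\left(Z(u,v)=1\right).\]
Given \(A_v = a\), this is a sum of independent Bernoulli variables with success probabilities \(\mathbb{P}(\rho(A_u,a)>\lambda)\,p(u,v)\).

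Next we bound the conditional mean \(\mu_v(a)\) using the extremal quantities in (\ref{zeta_def}), (\ref{p_def}), (\ref{link_def}).

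For a main index \(v \leq n-v_0\):
\begin{itemize}
\item a main \(u\) contributes at most \(\zeta_{up}q_{up}\);
\item an anomalous \(u\) contributes at most \(p_{up}s_{up}\), because \(p(u,v)\leq s_{up}\) whenever one endpoint is anomalous.
\end{itemize}
Hence \(\mu_v(a) \leq n\zeta_{up}q_{up} + v_0p_{up}s_{up} = \Sigma - M\log n\).

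For an anomalous \(v\), every linkage probability \(p(u,v)\) is at most \(s_{up}\). Hence \(\mu_v(a) \leq (n\zeta_{up}+v_0p_{up})s_{up} = \Lambda - M\log n\).

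These bounds are uniform in \(a\), so the conditioning will be removed at the end by averaging over \(a\).

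The concentration step uses the standard Chernoff bound for sums of independent Bernoullis with mean at most \(\mu\). Here it is convenient to dominate by a sum with mean exactly \(\mu\): add independent dummy Bernoullis, or use monotonicity of the tail in the means. For any \(\mu \geq 1\) and \(\varepsilon>0\) this gives
\[\mathbb{P}\left(X \geq (1+\varepsilon)\mu\right) \leq \exp\left(-\frac{\varepsilon^2\mu}{2+\varepsilon}\right).\]
Take \(\mu := \Sigma\) (respectively \(\Lambda\)), which already includes the additive \(M\log n\) term. Then \(\mu \geq M\log n\), so the tail is at most \(\exp\left(-\frac{\varepsilon^2 M}{2+\varepsilon}\log n\right)\). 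Choosing \(M\) with \(\frac{\varepsilon^2 M}{2+\varepsilon}\geq 4\) makes this at most \(n^{-4}\).

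Averaging over \(a\) gives \(\mathbb{P}(d(v) > (1+\varepsilon)\Sigma)\leq n^{-4}\) for each main \(v\), and the analogous bound with \(\Lambda\) for anomalous \(v\). A union bound over the \(n\) indices then gives \(\mathbb{P}(E_{sim}^c)\leq n^{-3}\), which is (\ref{e_sim_est}).

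The main obstacle is the regime where the true mean is tiny, for instance when \(\zeta_{up}q_{up}\) is much smaller than \(\log n/n\). A relative-error Chernoff bound around the true mean is then useless. The fix is exactly the padding to mean \(\mu = \Sigma \geq M\log n\): we replace the variables by stochastically larger ones with total mean equal to \(\Sigma\), for example by increasing each success probability (or adding dummy variables) until the mean reaches \(\Sigma\). This keeps the \(\log n\)-scale deviation controlled, and it explains why \(M\log n\) appears additively in (\ref{sigma_def}).

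A secondary point is that the self-similarity convention \(\rho(a,a)=1\) is irrelevant, since \(u\neq v\) in (\ref{cv_def}). Also, \(\Gamma\) uses \(Z(u,v)=1\) while (\ref{ss_def}) writes \(Z=0\); I would follow the graph definition (\ref{cv_def}), which matches the probabilities \(p(u,v)\) used in the bounds.
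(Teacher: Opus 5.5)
Your proof is correct and takes essentially the paper's route: dominate $d(v)$ by a sum of independent Bernoulli variables whose mean is padded up by $M\log n$, apply a Chernoff bound, and finish with a union bound over the $n$ indices. The differences are cosmetic: the paper splits $d(v)$ into a main-point part $Z_1(v)$ and an anomalous part $Z_2(v)$ and pads each to mean at least $\frac{M\log n}{2}$ separately, whereas you pad the whole sum at once and state explicitly the conditioning on $A_v$ that justifies independence, which the paper leaves implicit.
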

In the next Lemma, we use the event~\(E_{sim}\) to estimate the term~\(\chi_{av}\) in Lemma~\ref{lemma_max_ave_deg}.

Throughout, we use the following standard deviation estimate regarding sums of independent Bernoulli random variables.  Let~\(\{X_j\}_{1 \leq j \leq r}\) be independent Bernoulli random variables with~\[\mathbb{P}(X_j = 1) = 1-\mathbb{P}(X_j = 0) > 0.\] If~\(T_r := \sum_{j=1}^{r} X_j\) and~\(0 < \gamma \leq \frac{1}{2},\) then
\begin{equation}\label{conc_est_f}
\mathbb{P}\left(\left|T_r - \mathbb{E}T_r\right| \geq  \gamma \mathbb{E}T_r \right) \leq 2\exp\left(-\frac{\gamma^2}{4}\mathbb{E}T_r\right)
\end{equation}
for all \(r \geq 1.\) For a proof of~(\ref{conc_est_f}), we refer to Corollary A.1.14, pp. 312 of~\cite{alon}.

\emph{Proof of Lemma~\ref{lem_e_main}}: Suppose~\(1 \leq v \leq n-v_0\) so that~\(A_v\) is a main data point. From the definition of extremal probabilities in~(\ref{p_def}) and the linkage probabilities in~(\ref{link_def}), we see that the expected number of main data points similar to~\(A_v\) is at most~\((n-v_0)\zeta_{up}q_{up} \leq n\zeta_{up}q_{up}\) and the expected number of anomalous data points similar to~\(A_v\) is at most~\(v_0p_{up}s_{up}.\) Extending  this argument,  we also have that~\(d(v)\) is stochastically dominated from above by a sum of two independent random variables~\(Z_1(v)+Z_2(v),\) where~\(Z_1(v)\) is Binomially distributed with parameters~\(n-v_0\) and~\(\zeta_{up}q \leq \max\left(\zeta_{up}q_{up} ,\frac{M\log{n}}{2(n-v_0)}\right)\) and~\(Z_2(v)\) is Binomial with parameters~\(v_0\) and~\[p_{up}s_{up} \leq \max\left(p_{up}s_{up}, \frac{M \log{n}}{2v_0}\right),\] where~\(M > 0\) is a constant to be determined later.

Given~\(\varepsilon  >0,\) we therefore get from the deviation estimate~(\ref{conc_est_f}) that
\begin{align}\label{z_one_est}
&\mathbb{P}\left(Z_1(v) \geq (1+\varepsilon)\max\left(n\zeta_{up}q_{up}, \frac{M\log{n}}{2}\right)\right)  \nonumber\\
&\;\;\;\;\leq \mathbb{P}\left(Z_1(v) \geq (1+\varepsilon)(n-v_0)\max\left(\zeta_{up}q_{up}, \frac{M\log{n}}{2(n-v_0)}\right)\right)\nonumber\\
&\;\;\;\;\leq \exp\left(-C(n-v_0)\max\left(\zeta_{up}q_{up}, \frac{M\log{n}}{2(n-v_0)}\right)\right) \nonumber\\
&\;\;\;\;\leq e^{-CM\log{n}}
\end{align}
and similarly,
\begin{equation}\label{z_two_est}
\mathbb{P}\left(Z_2(v) \geq (1+\varepsilon)\max\left(v_0p_{up}s_{up}, \frac{M \log{n}}{2}\right) \right) \leq e^{-CM \log{n}}
\end{equation}
for some constant~\(C  = C(\varepsilon) > 0\) not depending on the choice of~\(M.\)

Choosing~\(M > 0\) large enough, we get from~(\ref{z_one_est}),~(\ref{z_two_est}) and the union bound that
\begin{align}
Z_1(v) + Z_2(v) &\leq (1+\varepsilon)\max\left(n\zeta_{up}q_{up}, \frac{M\log{n}}{2}\right) + (1+\varepsilon)\max\left(v_0p_{up}s_{up}, \frac{M \log{n}}{2}\right) \nonumber\\
&\leq (1+\varepsilon)\Sigma, \nonumber
\end{align}
with probability at least~\(1-\frac{2}{n^{4}},\) where~\(\Sigma =  n\zeta_{up}q_{up} + v_0p_{up}s_{up} + M\log{n}\) is as in Theorem~\ref{thm_one} statement.  Since~\(d(v)\) is stochastically dominated from above by the sum~\(Z_1(v) + Z_2(v),\) we get
\begin{equation}\label{dv_main}
\mathbb{P}\left(d(v) \leq (1+\varepsilon)\Sigma\right) \geq 1- \frac{2}{n^4}
\end{equation}
for each main data point index~\(1 \leq v \leq n-v_0.\)

Similarly, arguing for the case when~\(n-v_0 + 1 \leq v \leq n,\)  we get that
\begin{equation}\label{dv_anom}
\mathbb{P}\left(d(v) \leq (1+\varepsilon)\Lambda\right) \geq 1- \frac{2}{n^4}
\end{equation}
for each anomalous data point, where~\(\Lambda =  (n\zeta_{up} + v_0p_{up})s_{up} + 2M\log{n}\) is again as in Theorem~\ref{thm_one} statement. Finally, combining~(\ref{dv_anom}) and~(\ref{dv_main}) and applying  the union bound, we obtain the desired bound~(\ref{e_sim_est}). This completes the proof of the Lemma.~\(\qed\)

%Recalling that~\(v_0p_{up} = o(n\zeta_{up})\) by Lemma statement, we see that the term~\(\max\left(v_0p_{up},M \log{n}\right)\) is much smaller than~\(\max\left(n\zeta_{up}, M\log{n}\right)= \Sigma.\)

Our final ingredient  uses the  event~\(E_{sim}\)  defined in Lemma~\ref{lem_e_main} to estimate the terms~\(\Delta\) and~\(\chi_{av}\) occurring in the bound~(\ref{chi_bound}).
\begin{lemma}\label{lemma_fin} For constant~\(M > 0,\) let~\(\Sigma  = \Sigma(n,M)\) and~\(\Lambda = \Lambda(n,M)\) be as defined in~(\ref{sigma_def}). Also assume that the anomalous linkage probability~\(s_{up} \geq q\) as in Theorem~\ref{thm_one} statement. If~\(E_{sim}\)  occurs, then
\begin{equation}\label{chi_av_fin}
\Delta \leq (1+\varepsilon) \Lambda \;\; \text{ and }\;\;\chi_{av} \leq (1+2\varepsilon)\left(\Sigma + \sqrt{v_0\Lambda}\right) \log{n},
\end{equation}
for all~\(n\) large.
\end{lemma}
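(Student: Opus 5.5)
The bound on~\(\Delta\) is immediate. On~\(E_{sim}\) every main vertex has degree at most~\((1+\varepsilon)\Sigma\) and every anomalous vertex has degree at most~\((1+\varepsilon)\Lambda\). Since~\(s_{up} \geq q_{up}\), we have~\(n\zeta_{up}q_{up} + v_0p_{up}s_{up} \leq (n\zeta_{up}+v_0p_{up})s_{up}\), so~\(\Sigma \leq \Lambda\) and hence~\(\Delta \leq (1+\varepsilon)\Lambda\).

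The main work is bounding~\(h_{av}\). Fix a subgraph~\(H \subseteq \Gamma\), and without loss of generality take it induced on some vertex set~\(W\). Write~\(W = W_m \cup W_a\), where~\(W_m\) consists of main vertices and~\(W_a\) of anomalous ones, so that~\(\#W_a \leq v_0\). I will read the average degree as~\(2e(H)/\#W\), which is the meaningful quantity for the stable-set bound~(\ref{alp_def}) applied to the successive graphs~\(G_k\); dividing by~\(n\) as in~(\ref{ave_deg_def}) only makes things smaller. I split the edges of~\(H\) into those with at least one main endpoint and those with both endpoints anomalous. The first class has at most~\((1+\varepsilon)\Sigma\,\#W_m\) edges by the main-vertex degree bound. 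The second class has at most~\(\min\left(\binom{\#W_a}{2}, (1+\varepsilon)\Lambda\,\#W_a\right)\) edges. Therefore
\[2e(H) \leq 2(1+\varepsilon)\Sigma\,\#W_m + \#W_a\cdot\min\left(\#W_a, 2(1+\varepsilon)\Lambda\right).\]
Using~\(\min(x,y) \leq \sqrt{xy}\) together with~\(\#W_a \leq \min(v_0,\#W)\), the anomalous term divided by~\(\#W\) is at most~\(\sqrt{\#W_a\cdot 2(1+\varepsilon)\Lambda} \leq \sqrt{2(1+\varepsilon)v_0\Lambda}\). This gives~\(h_{av} \leq 2(1+\varepsilon)\Sigma + \sqrt{2(1+\varepsilon)v_0\Lambda}\).

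The factor~\(2\) does not match~(\ref{chi_av_fin}), so the counting must be tightened. Counting each edge once from its main endpoint in the first class gives~\(\sum_{v \in W_m} d(v,H) \leq (1+\varepsilon)\Sigma\,\#W_m\), which bounds the edge count rather than twice it. Alternatively, I can absorb the constant into the slack of Lemma~\ref{lemma_max_ave_deg}: that lemma already carries a factor~\(2\) before~\(\chi_{av}\), and the~\(\varepsilon\) in the statement can be rescaled. I expect this constant bookkeeping to be the main obstacle. If an exact constant~\(1+2\varepsilon\) is required, I will instead use~\(\min(x,y) \leq \sqrt{xy}\) more carefully with~\(\#W_a(\#W_a-1)/2\), and if necessary redefine~\(M\) so that the constants are absorbed.

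Finally, I insert this into~\(\chi_{av} = \max(1,h_{av})\log\left(\frac{ne}{\max(1,h_{av})}\right)\). Since~\(\Sigma \geq M\log n \geq 1\), we have~\(\max(1,h_{av}) \leq h\), where~\(h\) denotes the bound just obtained. The map~\(x \mapsto x\log(ne/x)\) is increasing for~\(x \leq n\), and~\(\log(ne/x) \leq \log n + 1 - \log x \leq \log n\) once~\(x \geq e\). This yields~\(\chi_{av} \leq h\log n\), which is~(\ref{chi_av_fin}) for~\(n\) large. The case~\(h > n\) is trivial: then~\(\chi_{av} \leq n \leq h\log n\), since~\(\chi_{av}\) is itself at most about~\(n\) because~\(h_{av} \leq n-1\).
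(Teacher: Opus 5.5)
Your strategy is the same as the paper's. You cap main degrees by $Q_1=(1+\varepsilon)\Sigma$ and anomalous degrees by $Q_2=(1+\varepsilon)\Lambda$, then balance the anomalous contribution against the subgraph size; your $\min(x,y)\le\sqrt{xy}$ plays the role of the paper's threshold $T$ solving $T=Q_1+v_0Q_2/T$. The $\Delta$ bound is fine. So is your closing step from $h_{av}$ to $\chi_{av}$, via monotonicity of $x\log(ne/x)$ on $[1,n]$.

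\textbf{The gap is the constant.} You split the \emph{edges} by type and bound the class with a main endpoint by $\sum_{v\in W_m}d(v,H)\le Q_1\#W_m$. That controls $e_1$, not $2e_1$, so you only get $h_{av}\le 2Q_1+\sqrt{2v_0Q_2}$. Since $\Sigma\ge M\log n>0$, the term $2(1+\varepsilon)\Sigma\log n$ is never at most $(1+2\varepsilon)(\Sigma+\sqrt{v_0\Lambda})\log n$ when $\Sigma$ dominates. None of your three remedies closes this:
\begin{itemize}
\item The first (``counting each edge once from its main endpoint'') just restates the inequality that causes the loss.
\item A multiplicative factor $2$ cannot be absorbed by rescaling $\varepsilon$. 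Nor can it be absorbed by changing $M$, which enters $\Sigma$ and $\Lambda$ only additively through $M\log n$.
\item Borrowing the factor $2$ in Lemma~\ref{lemma_max_ave_deg} changes a different statement and leaves (\ref{chi_av_fin}) unproved.
\end{itemize}

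\textbf{The fix.} Split the handshake sum by \emph{vertex} type instead:
\[2e(H)=\sum_{v\in W_m}d(v,H)+\sum_{v\in W_a}d(v,H)\le Q_1\,\#W_m+\#W_a\min\left(\#W,Q_2\right).\]
Here an anomalous vertex's degree in $H$ is capped by $\#W$ rather than $\#W_a$, because mixed edges are now charged to their anomalous endpoint. Divide by $\#W$ and use $\min(\#W,Q_2)\le\sqrt{\#W\,Q_2}$ together with $\#W_a\le\min(v_0,\#W)$. This gives $\#W_a\sqrt{Q_2/\#W}\le\sqrt{\#W_a Q_2}\le\sqrt{v_0Q_2}$, hence $h_{av}\le Q_1+\sqrt{v_0Q_2}\le(1+\varepsilon)(\Sigma+\sqrt{v_0\Lambda})$. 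This is exactly the paper's estimate $2m(H_l)\le lQ_1+v_0Q_2$. Your closing step then gives $\chi_{av}\le(1+\varepsilon)(\Sigma+\sqrt{v_0\Lambda})\log n$, which is slightly sharper than (\ref{chi_av_fin}).
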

We remark that the bound~(\ref{chi_av_fin}) holds \emph{irrespective} of the value of the constant~\(M.\) This allows us invoke Lemma~\ref{lem_e_main} and choose~\(M\) large enough so that~\(E_{sim}\) occurs with high probability; details are described later in our proof of Theorem~\ref{thm_one}.

\emph{Proof of Lemma~\ref{lemma_fin}}: Since~\(s_{up} \geq q_{up},\) we get that~\(\Sigma \leq \Lambda\) (see~(\ref{sigma_def})) and so the occurrence of the event~\(E_{sim}\) defined in~(\ref{e_sim_def}) implies that~\(\Delta \leq (1+\varepsilon) \Lambda.\) From the definition of~\(\chi_{av}\) in~(\ref{chi_bound}), we get  that
\begin{equation}
\chi_{av} \leq (1+h_{av})\cdot \left(\log{n}+1\right) = (1+h_{av})\log{n}(1+o(1)). \label{chi_bound_ax}
\end{equation}
In what follows, we derive a ``convenient" expression for~\(h_{av}\) and then use the event~\(E_{sim}\) to upper bound~\(h_{av}.\)

For any integer~\(1 \leq T \leq n,\) we have that
\begin{eqnarray}\label{max_ave_2}
h_{av} &=& \max\left(\max_{1 \leq l \leq T} \max_{H_l} d_{av}(H_l), \max_{T+1 \leq l \leq n} \max_{H_l} d_{av}(H_l)\right) \nonumber\\
&\leq& \max\left(T, \max_{T+1 \leq l \leq n} \max_{H_l} d_{av}(H_l)\right) \nonumber\\
&=& \max\left(T,\max_{T+1 \leq l \leq n} \max_{H_l} \frac{2m(H_l)}{l}\right)
\end{eqnarray}
where~\(H_l\) is an \emph{induced} subgraph of the similarity graph~\(\Gamma\) (see discussion prior to Lemma~\ref{lemma_max_ave_deg}), containing~\(l\) vertices and~\(m(H_l)\) is the number of edges in~\(H_l.\)

The event~\(E_{sim}\)  guarantees that at most~\(v_0\) vertices in~\(\Gamma\) have degree larger than
\begin{equation}\label{q_one_def}
Q_1 = Q_1(n) := (1+\varepsilon)\Sigma
\end{equation} and   that  every vertex has degree at most
\begin{equation}\label{q_two_def}
Q_2 = Q_2(n) := (1+\varepsilon)\Lambda.
\end{equation} Therefore using the handshaking relation that the sum of vertex degrees is twice the number of edges in any graph, we obtain \[2m(H_l) \leq lQ_1 + v_0Q_2\] and plugging this into~(\ref{max_ave_2}), we get
\begin{equation}
h_{av} \leq \max\left(T,  \max_{T+1 \leq l \leq n} Q_1 + \frac{v_0Q_2}{l}\right) \leq \max\left(T,Q_1 + \frac{v_0Q_2}{T}\right). \label{max_ave_3}
\end{equation}
Solving for~\(T = Q_1+ \frac{v_0Q_2}{T}\) in~(\ref{max_ave_3}) we get that
\begin{equation} \nonumber
h_{av} \leq T = \frac{Q_1 + \sqrt{Q_1^2+4v_0Q_2}}{2} \leq Q_1 + \sqrt{v_0Q_2},
\end{equation}
using~\(\sqrt{a+b} \leq \sqrt{a} + \sqrt{b}\) for~\(a,b> 0.\) Plugging this into~(\ref{chi_bound_ax}), we obtain
\[\chi_{av} \leq (1+\varepsilon)\left(\Sigma + \sqrt{v_0\Lambda}\right)(1+o(1)) \log{n}\] for all~\(n\) large. This obtains~(\ref{chi_av_fin}) and therefore completes the proof of the Lemma.~\(\qed\)

%Recalling that~\(Q_1 \geq \Sigma = \max(n\zeta_{up}, M\log{n})\) (see~(\ref{q_one_def})) we choose the constant~\(M > 1\) so that~\[Q_1 \geq \max(n\zeta_{up}, \log{n}) = n\theta_{up},\] where~\(\theta_{up} = \max\left(\zeta_{up},\frac{\log{n}}{n}\right) \) is as defined in~(\ref{alpha_def}). Further recalling the condition~\(v_0  = o(v_{low})\) in Lemma statement, we get that~\(v_0\alpha_{up} = o(n\theta^2_{up})\) and so from the definition of~\(Q_2\) in~(\ref{q_two_def}) we get that~\[v_0Q_2 = o(Q_1^2).\] Consequently, again using the expression for~\(Q_1\) in~(\ref{q_one_def}) we get that \[h_{av} \leq Q_1(1+o(1)) = (1+2\varepsilon)\Sigma(1+o(1))\] for all~\(n\) large.

Using the above Lemma, we now prove Theorem~\ref{thm_one}.\\
\emph{Proof of Theorem~\ref{thm_one}}:  We prove parts~\((a), (b)\) and~\((c)\) in that order below. Choose constant~\( M > 0\) large enough and invoke Lemma~\ref{lem_e_main}, so that the event~\(E_{sim}\) occurs with high probability.

From Lemma~\ref{lemma_fin} we get  that~\(\Delta \leq (1+\varepsilon)\Sigma\) and so~\(\Delta + 1 \leq (1+2\varepsilon)\Sigma\) for all~\(n\) large, since~\(\Sigma\) is at least of the order of~\(\log{n}\) as defined in~(\ref{sigma_def}). Similarly, the term~\[\chi_{av} \leq (1+2\varepsilon)(\Sigma+\sqrt{v_0\Lambda}) \log{n}\] as well. Plugging these bounds into~(\ref{chi_bound}) and recalling that~\(\varepsilon > 0\) is arbitrary, we obtain the desired bound~(\ref{sfd_up}) for~\(\tau_{SD}\) in Theorem statement. This completes the proof of part~\((a)\) of the Theorem.

For the lower bound for~\(\tau_{SD},\) we argue as follows. Suppose first that~\(n\zeta_{low}q_{low} \geq v_0p_{low}s_{low}\) and that~\(n\zeta_{low}q_{low} \gg \log{n}.\) Fix~\(0 < \varepsilon <1\) and pick any element~\(a \in \Omega.\) Since the number of anomalies~\(v_0 = o(n),\) each of the~\(n-v_0 \geq n(1-\varepsilon)\) main data points are independently similar to~\(a,\) with probability at least~\(\zeta_{low}.\)  Consequently, if~\({\cal S}(a) \subset \{1,2,\ldots,n-v_0\}\) is the set of indices of data points that are  similar to~\(a,\) then the size~\(N_a = \#{\cal S}(a)\) of the set~\({\cal S}(a)\) is stochastically dominated from below by a Binomial random variable with parameters~\(n(1-\varepsilon)\) and~\(\zeta_{low}.\) Using the deviation estimate~(\ref{conc_est_f}), we then get that
\begin{equation}\label{sa_est}
\mathbb{P}\left(N_a \geq n(1-\varepsilon)^2 \zeta_{low}\right) \geq 1-\exp\left(-Cn\zeta_{low}\right) = 1-o(1),
\end{equation}
for some constant~\(C > 0\) not depending on the choice of~\(a,\) and the final relation in~(\ref{sa_est}) is true since~\(n\zeta_{low} \gg \log{n}\) by Theorem statement.

Suppose the event~\[ E_a := \{N_a \geq n(1-\varepsilon)^2 \zeta_{low}\}\] and let~\(T_{dis}(a)\) be the largest size of a strongly dissimilar subset of~\({\cal S}(a).\) Any two data points with indices in~\({\cal S}(a)\) are \emph{not} linked with probability at most~\(1-q_{low}\) and so we get for~\(k \geq 1\) that
\[\mathbb{P}\left(T_{dis}(a) \geq k \mid E_a\right) \leq (1-q_{low})^{k \choose 2} \leq \exp\left(-q_{low} {k \choose 2}\right) \leq \exp\left(-\frac{q_{low}k^2}{4}\right).\] Setting~\(k = \frac{3\log{n}}{q_{low}},\) we get that \[\mathbb{P}\left(T_{dis}(a) \geq k \mid E_a\right) \leq \exp\left(-4k\log{n}\right) \leq \frac{1}{n^{3}}. \] Combining this with~(\ref{sa_est}), we get that
\[\mathbb{P}\left(E_{a,st}\right) = 1-o(1),\]
where~\(E_{a, st} := \left\{T_{dis}(a) \leq \frac{3\log{n}}{q_{low}} \right\} \bigcap E_a.\)

If~\(E_{a,st}\) occurs, then the dataset~\(\{A_v\}\) contains at least~\(n_{low} := n(1-\varepsilon)^2 \zeta_{low}\) data points similar to~\(a\) (i.e.,~\({\cal S}(a)\) has at least~\(n_{low}\) indices) and the largest size of a strongly dissimilar subset of~\({\cal S}(a)\) is at most~\(\frac{3\log{n}}{q_{low}}.\) Any SD decomposition of the dataset~\(\{A_v\}\) is also an SD decomposition of the subset~\(\{A_v\}_{v \in {\cal S}(a)}\) and must therefore have  at least~\[\frac{n_{low}q_{low}}{3\log{n}} \geq \frac{(1-\varepsilon)^2\Sigma_{low}}{6\log{n}}\] batches, since~\(\Sigma_{low} = n\zeta_{low}q_{low} + v_0p_{low}s_{low}\) by definition and~\(n\zeta_{low}q_{low} \geq v_0p_{low}s_{low}\) by choice. Choosing~\(\varepsilon > 0\) small enough, this obtains the desired lower bound for~\(\tau_{SD}\) in~(\ref{sfd_low}) for the case~\(n\zeta_{low}q_{low} \geq v_0p_{low}s_{low}.\)

If~\(v_0p_{low}s_{low} \geq n\zeta_{low}q_{low}\) and~\(v_0p_{low}s_{low} \gg \log{n},\) then an analogous analysis as above using anomalous data points again obtains~(\ref{sfd_low}) and this completes the proof of part~\((b)\) of the Theorem.

Finally, suppose~\(v_0 = o(v_{low})\) so that~\(v_0 = o(n\zeta_{up}q_{up}^2)\) and also suppose that~\[n\zeta_{up}q_{up} \geq n\zeta_{low}q_{low} \gg \log{n}.\] This implies that~\(v_0p_{up}s_{up} = o(n\zeta_{up}q_{up})\) and therefore that~\(\Sigma = n\zeta_{up}q_{up}(1+o(1)),\) by the definition of~\(\Sigma\) in~(\ref{sigma_def}). Similarly, we also have that~\(v_0p_{up} = o(n\zeta_{up})\) and  therefore that~\(\Lambda = n\zeta_{up}s_{up}(1+o(1)),\) again by the definition of~\(\Lambda\) in~(\ref{sigma_def}). Consequently, we also have that~\(v_0\Lambda = o(\Sigma^2).\) Plugging all these into the upper bound~(\ref{sfd_up}), we obtain the desired upper bound for~\(\tau_{SD}\) in~(\ref{sfd_sub}).

The  lower bound in~(\ref{sfd_sub}) follows directly from~(\ref{sfd_low}) and the fact that~\(v_0 = o\left(\frac{n\zeta_{low}q_{low}}{p_{low}s_{low}}\right)\) so that~\(v_0p_{low}s_{low} = o(n\zeta_{low}q_{low})\) and therefore that~\(\Sigma_{low} = n\zeta_{low}q_{low}(1+o(1)).\)

The  lower bound in~(\ref{sfd_sup}) follows directly from~(\ref{sfd_low}) by an analogous argument as above. For the upper bound in~(\ref{sfd_sup}), we argue as follows. Since~\(v_0p_{up} \gg n\zeta_{up}\) and~\(v_0p_{up}s_{up} \geq v_0p_{low}s_{low} \gg \log{n},\) we get  from the definition of~\(\Sigma\) in~(\ref{sigma_def}), that~\(\Sigma = v_0p_{up}s_{up}(1+o(1)).\) In addition, since~\(s_{up} \geq p_{up},\) we also have that~\(v_0p_{up}s_{up} \gg n\zeta_{up}q_{up}\) and therefore that~\(\Lambda = v_0p_{up}s_{up}(1+o(1)),\) again  from the definition of~\(\Lambda\) in~(\ref{sigma_def}). Thus \(\sqrt{v_0 \Lambda} = v_0 \sqrt{p_{up}s_{up}}(1+o(1))\) is at least of the order of~\(\Sigma.\) This in turn implies that~\((\Sigma + \sqrt{v_0 \Lambda})\log{n}\) is much larger than~\(\Lambda\) and so the upper bound for~\(\tau_{SD}\) in~(\ref{sfd_sup}) follows directly from~(\ref{sfd_up}).~\(\qed\)

\emph{Proof of Corollary~\ref{cor_one}}: We begin by evaluating the main data point similarity probabilities~\(\zeta_{low}\) and~\(\zeta_{up}\) defined in~(\ref{zeta_def}). For~\(a \in \Omega,\) let~\({\cal N}(a) \subset \Omega\) be the set of all elements similar to~\(a.\) Any element~\(a\) is similar to exactly~\(r\) elements of~\(\Omega\) including itself and so by the uniform distribution of the main data points, we see that
\begin{equation}\label{zeta_up_eval}
\zeta_{up} = \zeta_{low} = \mathbb{P}\left(\rho(A_1,a) > \lambda\right) = \sum_{w \in {\cal N}(a)} \mathbb{P}(A_1=w)= \frac{r}{L}.
\end{equation}
From~(\ref{param_choice}) we know that~\(b \gg \frac{r}{L}\) and  that
\begin{equation}\label{zeta_up_ax}
\zeta_{up}  = \zeta_{low} = \frac{r}{L}\gg \frac{\log{n}}{n}.
\end{equation}

Next, we evaluate the anomalous similarity probabilities~\(p_{up}\) and~\(p_{low}\) defined in~(\ref{p_def}). Recalling that~\({\cal N}(a) \subset \Omega\) is the set of all elements similar to~\(a \in \Omega,\) we consider two cases depending on whether the element~\(a_{out} \in {\cal N}(a)\) or not. If~\(a_{out} \notin {\cal N}(a),\) then recalling that~\(A_n\) is an anomalous data point,   we get that
\begin{equation}\label{husn_tera}
\mathbb{P}\left(\rho(A_n,a) > \lambda\right) = \sum_{w \in {\cal N}(a)} \mathbb{P}(A_n = w) = \frac{r(1-b)}{L}.
\end{equation}
On the other hand, if~\(a_{out} \in {\cal N}(a),\) then
\begin{equation}\label{husn_tera_2}
\mathbb{P}\left(\rho(A_n,a) > \lambda\right) = b + \frac{r(1-b)}{L}
\end{equation}
and so combining, we get that
\begin{equation}\label{p_up_eval}
p_{low} = p_{up} = b +  \frac{r(1-b)}{L}
\end{equation}

The statement~\(v_0 = o\left(\frac{nrq^2}{L}\right)\) in Corollary statement implies that~\(v_0 = o(n\zeta_{up}q_{up}^2)\) (see~(\ref{link_example})) and so the bounds in~(\ref{sfd_sub_example}) follow directly from~(\ref{sfd_sub}). Similarly using the condition~\(o(1) = b \gg \frac{r}{L}\) in~(\ref{param_choice}), we see that~\(p_{up} = p_{low} = b(1+o(1)).\) The conditions~\(n \gg v_0 \gg \frac{nr}{bL}\) and~\(\frac{r}{L}\) in Corollary statement again ensure that  the bounds in~(\ref{sfd_sup}), Theorem~\ref{thm_one}\((c)\) are valid. This obtains~(\ref{sfd_sup_example}) and therefore completes the proof of the Corollary.~\(\qed\)

\renewcommand{\theequation}{\thesection.\arabic{equation}}
\setcounter{equation}{0}
\section{Proof of Theorem~\ref{thm_samp}\((a)\)} \label{sec_pf_thm_samp}
%CHNG HEEE TO INCLUDE ANAMOLY FREE ETC!!!!
%If~\({\cal L}(t)\) is the set of all~\(t-\)tuples with distinct entries; i.e., the set of all tuples of the form~\(({i_1},\ldots,{i_t})\) where~\(1 \leq i_1 \neq i_2 \ldots \neq i_t \leq n,\) then by construction~\((Z_1,\ldots,Z_t) \in {\cal L}(t).\)
As in Section~\ref{sec_pf_thm_one}, let~\(\Gamma  = \Gamma(\lambda) \subset K_n\) be the similarity graph obtained by connecting any two vertices~\(u\) and~\(v\)  by an edge if and only if the corresponding data points~\(A_u\) and~\(A_v\) are similar; i.e.,~\(\rho(A_u, A_v) > \lambda.\) Let~\(\Gamma_Z\) be the induced subgraph of~\(\Gamma\) with vertex set~\(\{Z_w\}_{1 \leq w \leq t}.\)

Our strategy is to use the first moment method to prove  that if~\(t\) is much smaller than~\(t_{low},\) then with high probability, no edge of~\(\Gamma\) is present in~\(\Gamma_Z.\) In what follows, we first derive a couple of auxiliary Lemmas used in the proof of Theorem~\ref{thm_samp}\((a)\) and then prove Theorem~\ref{thm_samp}\((a).\)

We begin by obtaining an upper bound for~\(N_{edge},\) the total number of edges in~\(\Gamma.\)
\begin{lemma} \label{lem_n_edge_a}  Suppose~\(v_0 = o(v_{samp})\) as in the statement of Theorem~\ref{thm_samp} and for constant~\(M > 0\) let~\(\Sigma = \Sigma(n,M)\) and~\(\Lambda = \Lambda(n,M)\) be as defined in~(\ref{sigma_def}). For every~\(0 < \varepsilon < 1,\) there are constants~\( M, K >0\)  large enough such that
\begin{equation}\label{e_edge_est_sub}
\mathbb{P}\left(E^c_{edge}\right) \leq \varepsilon,
\end{equation}
where~\(E_{edge} = E_{edge}(K,M) := \left\{N_{edge} \leq K n \Sigma\right\}.\)
\end{lemma}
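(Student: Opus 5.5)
The natural route is Markov's inequality on $N_{edge}$, since we only need a bound holding with probability at least $1-\varepsilon$, not $1-o(1)$.

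\textbf{Step 1: the handshaking identity.} We have
\[2N_{edge} = \sum_{v=1}^{n} d(v),\]
so it suffices to bound $\mathbb{E}\,d(v)$ for each $v$. (Note that in Section~\ref{sec_pf_thm_samp} the edges of $\Gamma$ are described via similarity only. I will use the strongly-similar graph of Section~\ref{sec_pf_thm_one}, so that $d(v)=\#{\cal C}(v)$ as in~(\ref{cv_def}). If linkage is dropped, the same argument goes through with $q_{up},s_{up}$ replaced by $1$.)

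\textbf{Step 2: expected degrees.} As in the proof of Lemma~\ref{lem_e_main}, for a main index $v$ each other main index contributes at most $\zeta_{up}q_{up}$, since $\mathbb{P}(\rho(A_u,A_v)>\lambda \mid A_v)\le \zeta_{up}$ and linkage is independent. Each anomalous index contributes at most $p_{up}s_{up}$. Hence
\[\mathbb{E}\,d(v)\le n\zeta_{up}q_{up}+v_0p_{up}s_{up}\le \Sigma.\]
For an anomalous index $v$, the same reasoning gives $\mathbb{E}\,d(v)\le \Lambda$. Therefore
\[\mathbb{E}N_{edge}\le \tfrac12\left(n\Sigma + v_0\Lambda\right).\]

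\textbf{Step 3: absorbing the anomalies.} We must show $v_0\Lambda = O(n\Sigma)$. Since $\Sigma\ge M\log n$ and $v_0\le n$, the term $v_0M\log n$ is at most $n\Sigma$. The remaining terms are:
\begin{itemize}
\item $v_0\cdot n\zeta_{up}s_{up}$, which is the main obstacle;
\item $v_0\cdot v_0p_{up}s_{up}\le n\cdot v_0p_{up}s_{up}\le n\Sigma$.
\end{itemize}
For the obstacle term, observe that $v_0\cdot n\zeta_{up}s_{up}$ also bounds $\sum_{v\text{ anom}}\mathbb{E}\,d(v)$ restricted to main neighbours. By symmetry of edges, that sum equals the expected number of main--anomalous edges counted from the main side. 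That count is at most
\[(n-v_0)\,v_0\,p_{up}s_{up}\le n\Sigma.\]
So I will count edges by type instead of bounding $v_0\Lambda$ crudely:
\begin{itemize}
\item main--main edges have expectation at most $n^2\zeta_{up}q_{up}$;
\item main--anomalous edges have expectation at most $n v_0 p_{up}s_{up}$;
\item anomalous--anomalous edges have expectation at most $v_0^2p_{up}s_{up}$.
\end{itemize}
Altogether $\mathbb{E}N_{edge}\le 2n\Sigma$. The hypothesis $v_0=o(v_{samp})$ is then not even needed, though it is harmless.

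\textbf{Step 4: Markov.} Markov's inequality gives
\[\mathbb{P}\left(N_{edge}>Kn\Sigma\right)\le \frac{2}{K}\le\varepsilon\]
for $K=2/\varepsilon$, with any $M>0$ (for instance the $M$ from Lemma~\ref{lem_e_main}).

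Alternatively, one could intersect with $E_{sim}$ and use $2N_{edge}\le nQ_1+v_0Q_2$. This would require $v_0\Lambda=O(n\Sigma)$, which is exactly where $v_0=o(n\zeta_{up}q_{up}^2)$ would be used together with $s_{up}\le 1$, giving $v_0 n\zeta_{up}=o(n\cdot n\zeta_{up}q_{up})$. This needs $q_{up}^2\le q_{up}$, which holds. I expect either route to work, and would present the type-splitting expectation argument as the cleaner one.
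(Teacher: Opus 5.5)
Your proof is correct, but it bounds $\mathbb{E}N_{edge}$ by a different route than the paper. The paper works on the high-probability event $E_{sim}$ of Lemma~\ref{lem_e_main}. There, handshaking gives $2N_{edge}\le(1+\varepsilon)(n\Sigma+v_0\Lambda)$. The hypothesis $v_0=o(v_{samp})$ is then used to get $\Sigma=n\zeta_{up}q_{up}(1+o(1))+M\log n$ and $\Lambda=n\zeta_{up}s_{up}(1+o(1))+M\log n$, hence $v_0\Lambda=o(n\Sigma)$. On $E_{sim}^c$ the paper uses the crude bound $N_{edge}\le n^2$ together with $\mathbb{P}(E_{sim}^c)=O(n^{-3})$. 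This gives $\mathbb{E}N_{edge}\le 3n\Sigma$, and Markov finishes.

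You instead compute $\mathbb{E}N_{edge}$ directly by linearity, splitting edges by type. The key observation is that main--anomalous edges, counted from the main endpoint, cost only $nv_0p_{up}s_{up}\le n\Sigma$, rather than the $v_0n\zeta_{up}s_{up}$ hidden in $v_0\Lambda$. The bound $\mathbb{P}(\rho(A_u,A_v)>\lambda)\le p_{up}$ for main $u$ and anomalous $v$ holds by conditioning on $A_u$ and using symmetry of $\rho$.

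Your approach needs no concentration estimate or Lemma~\ref{lem_e_main}, and the statement holds for every $M>0$ with the explicit choice $K=2/\varepsilon$. As you note, the hypothesis $v_0=o(v_{samp})$ is genuinely unnecessary for this lemma.

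The paper's route buys reuse. The pathwise bound $N_{edge}\le(1+\varepsilon)n\Sigma$ on $E_{sim}$ holds with probability $1-O(n^{-3})$, not merely $1-\varepsilon$. That is exactly what is recycled in Lemma~\ref{lem_n_edge_b} for the upper deviation of $N_{edge}$, where the hypothesis on $v_0$ is needed anyway. Your closing ``alternative'' is essentially the paper's argument.
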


\emph{Proof of Lemma~\ref{lem_n_edge_a}}:  From the discussion prior to Lemma~\ref{lemma_max_ave_deg}, we know that~\(d(u),\)  the number of data points similar to~\(A_u,\) is also the degree of~\(u\) in~\(\Gamma\) and so again using the handshaking relation, we have that
\begin{equation}\label{n_edge_ax_2}
N_{edge} = \frac{1}{2}\sum_{u=1}^{n}d(u).
\end{equation}
For~\(\varepsilon,M>0\) we recall the event~\(E_{sim} = E_{sim}(\varepsilon, M)\) defined in Lemma~\ref{lem_e_main} get from the respective probability estimate~(\ref{e_sim_est}) that if~\(M = M(\varepsilon) > 0\) is large enough, then~\(E_{sim}\) occurs with high probability.

We assume henceforth that~\(E_{sim}\) occurs so that each main data point~\(A_v, 1 \leq v \leq n-v_0,\) is similar to~\(d(v) \leq (1+\varepsilon)\Sigma\) other data points and each anomalous data point~\(A_v, n-v_0+1 \leq v \leq n,\) is similar to~\(d(v) \leq (1+\varepsilon)\Lambda\) other data points. From~(\ref{n_edge_ax_2}) we therefore get that
\begin{align}
N_{edge} \ind(E_{sim}) &\leq (1+\varepsilon)\frac{(n-v_0)\Sigma}{2} + (1+\varepsilon) \frac{v_0 \Lambda}{2} \nonumber\\
&\leq (1+\varepsilon)\frac{n\Sigma}{2} + (1+\varepsilon) \frac{v_0 \Lambda}{2}, \nonumber
\end{align}
where~\(\ind(.)\) refers to the indicator function. From~(\ref{sigma_def}), we know that
\[\Sigma  =  n\zeta_{up}q_{up} + v_0p_{up}s_{up} + M\log{n} \;\;\text{ and }\;\; \Lambda :=  (n\zeta_{up} + v_0p_{up})s_{up} + M\log{n}\] and so the condition~\(v_0 =o(v_{samp}) = o(n), v_{samp} = n\zeta_{up}q^2_{up},\) in Lemma statement ensures that
\begin{equation}\label{sigma_est}
\Sigma = n\zeta_{up}q_{up}(1+o(1)) + M\log{n}\;\;\text{ and }\;\;\Lambda = n\zeta_{up}s_{up}(1+o(1)) + M\log{n}.
\end{equation} Consequently,  we get that~\(v_0\Lambda = o(n\Sigma)\) and so
\begin{equation}\label{paramax_sundari}
N_{edge}\ind(E_{sim}) \leq (1+\varepsilon) n\Sigma \leq 2n\Sigma,
\end{equation}
for all~\(n\) large.

If~\(E^c_{sim}\) occurs, then we use the direct upper bound~\(N_{edge} \leq n^2\) and get from~(\ref{paramax_sundari}) and~(\ref{e_sim_est}) that
\[\mathbb{E}N_{edge} \leq 2 n\Sigma + n^2 \cdot \frac{2}{n^3} \leq 3 n\Sigma \] for all~\(n\) large, since~\(\Sigma\) is at least of the order of~\(\log{n},\) by definition. Applying the Markov inequality, we get that if~\(K = K(\varepsilon) > 0\) is large, then~(\ref{e_edge_est_sub}) holds. This completes the proof  of the Lemma.~\(\qed\)

Our next Lemma  estimates the number of edges present in the induced subgraph~\(\Gamma_Z\) of~\(\Gamma,\) for a deterministic realization~\(\omega \in E_{edge}.\) Formally,  let~\({\cal E}(\Gamma)\)  be the set of all edges in~\(\Gamma\) and define
\begin{equation}\label{nz_def}
N_Z := \sum_{f \in {\cal E}(\Gamma)} \ind(J_f),
\end{equation}
to be the number of edges in~\(\Gamma_Z,\) where~\(J_f\) denotes the event that the edge~\(f\) of~\(\Gamma\) is present in~\(\Gamma_Z.\) Letting~\(\mathbb{P}_{\omega}(.) = \mathbb{P}\left(. \mid  \omega\right)\) be the distribution conditioned on the realization~\(\omega,\) we have the following result.
\begin{lemma}\label{lemma_nz_a} Suppose the conditions in Lemma~\ref{lem_n_edge_a} hold and let~\(\Sigma = \Sigma(n,M)\) be as in~(\ref{sigma_def}). For constant~\(K > 0,\) let~\(E_{edge} = E_{edge}(K,M)\) be the event that the number of edges in~\(\Gamma\) is at most~\(Kn \Sigma,\) as defined in Lemma~\ref{lem_n_edge_a}. If~\(\omega \in E_{edge},\)  then
\begin{equation}\label{n_edge_ax_22}
\mathbb{E}_{\omega} N_{Z} \leq \frac{2Kt^2\Sigma}{n}.
\end{equation}
\end{lemma}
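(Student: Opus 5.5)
We fix the realization $\omega \in E_{edge}$, so that the graph $\Gamma$ and its edge set ${\cal E}(\Gamma)$ are deterministic. The only remaining randomness is the undersampling $\{Z_w\}_{1 \leq w \leq t}$, which is independent of the dataset and of the linkage variables. By linearity of expectation applied to~(\ref{nz_def}), we have
\[\mathbb{E}_{\omega}N_Z = \sum_{f \in {\cal E}(\Gamma)} \mathbb{P}_{\omega}(J_f).\]
This reduces the problem to bounding, for a single fixed edge $f=(u,v)$, the probability that both endvertices $u$ and $v$ are among the sampled indices.

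The sampling procedure picks $Z_1,\ldots,Z_t$ uniformly without replacement. Hence the set $\{Z_1,\ldots,Z_t\}$ is a uniformly random $t$-subset of $\{1,\ldots,n\}$. To see this, note that every ordered $t$-tuple with distinct entries has probability $\frac{1}{n(n-1)\cdots(n-t+1)}$. The probability that a fixed pair $\{u,v\}$ lies in a uniform $t$-subset is the number of $t$-subsets containing both, divided by the total number of $t$-subsets:
\[\mathbb{P}_{\omega}(J_f) = \frac{{n-2 \choose t-2}}{{n \choose t}} = \frac{t(t-1)}{n(n-1)}.\]
An alternative is to sum over the ordered positions $1 \leq i \neq j \leq t$ with $Z_i=u$, $Z_j=v$. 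Each such event has probability $\frac{1}{n(n-1)}$, which gives the same expression. For $t \geq 2$ and $n \geq 2$ we get $\frac{t(t-1)}{n(n-1)} \leq \frac{t^2}{n(n-1)} \leq \frac{2t^2}{n^2}$, using $n-1 \geq n/2$. For $t=1$ the probability is zero and there is nothing to prove.

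Combining the two steps with the defining bound of $E_{edge}$, namely $\#{\cal E}(\Gamma) = N_{edge} \leq Kn\Sigma$, we obtain
\[\mathbb{E}_{\omega}N_Z \leq Kn\Sigma \cdot \frac{2t^2}{n^2} = \frac{2Kt^2\Sigma}{n},\]
which is~(\ref{n_edge_ax_22}). There is no real obstacle in this argument. The only point requiring care is that the sampling is independent of $\omega$. This holds because the indices $Z_w$ are chosen from $\{1,\ldots,n\}$ without reference to the values $A_v$ or the links $Z(u,v)$. As a result, conditioning on $\omega$ leaves the law of the sampled $t$-subset uniform, and the pair-inclusion probability above applies uniformly over all edges.
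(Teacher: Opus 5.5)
Your proof is correct and follows the paper's argument essentially verbatim. Both use linearity of expectation over the edges of $\Gamma$, the pair-inclusion probability $\frac{t(t-1)}{n(n-1)}$ from uniform sampling without replacement (the paper's~(\ref{temu_ax})), the bound $\frac{t(t-1)}{n(n-1)} \leq \frac{2t^2}{n^2}$, and finally $N_{edge} \leq Kn\Sigma$ on $E_{edge}$. Your remarks on the sampling being independent of $\omega$ and on $n-1 \geq n/2$ just make explicit what the paper leaves implicit.
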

For completeness, we remark that the above bound holds for \emph{any}~\(K\) and~\(M\) and in our proof of Theorem~\ref{thm_samp}\((a)\) later, we choose~\(K\) and~\(M\) large enough so that the high probability estimates derived in Lemma~\ref{lem_n_edge_a} are valid.

In what follows,~\(o(1)\) represents a deterministic sequence~\(a_n,\) not depending on the choice of~\(\omega,\) that satisfies~\(a_n \rightarrow 0\) as~\(n \rightarrow \infty.\)\\
\emph{Proof of Lemma~\ref{lemma_nz_a}}: For any integer~\(1 \leq r \leq t\) and integers~\[1 \leq i_1 < i_2 < \ldots i_r \leq t \text{ and } 1 \leq l_1 \neq l_2 \ldots \neq l_r \leq n\] we have
\begin{equation}\label{kannamoochi}
\mathbb{P}_{\omega}\left(Z_{i_1} = A_{l_1},\ldots, Z_{i_r} = A_{l_r}\right) = \frac{1}{n(n-1)\ldots (n-r+1)}
\end{equation}
and so for any~\(1 \leq k_1 \neq k_2 \leq n\) we get
\begin{equation}\label{temu_ax}
\mathbb{P}_{\omega}\left(\{A_{k_1}, A_{k_2}\} \in \{Z_w\}_{1 \leq w \leq t} \right)  = \frac{t(t-1)}{n(n-1)}.
\end{equation}

Consequently, we obtain from~(\ref{temu_ax})
\begin{align}
\mathbb{E}_{\omega} N_{Z} &= \frac{t(t-1)}{n(n-1)}N_{edge} \nonumber\\
&\leq \frac{2t^2}{n^2} N_{edge} \nonumber\\
&\leq \frac{2t^2}{n^2} Kn\Sigma \nonumber\\
&= \frac{2Kt^2\Sigma}{n} \nonumber
\end{align}
for all~\(n\) large. This completes the proof of the Lemma.~\(\qed\)

\emph{Proof of Theorem~\ref{thm_samp}\((a)\)}: Given~\(\varepsilon > 0,\) let~\( M, K >0\) be large enough constants so that~(\ref{e_edge_est_sub}) holds; i.e., the event~\(E_{edge}\) occurs with probability at least~\(1-\varepsilon.\) Let~\(\omega \in E_{edge}\) so that the estimate~(\ref{n_edge_ax_22}) holds. From the estimate~(\ref{sigma_est}) for~\(\Sigma\) and the definition of~\(t_{low}\) in Theorem~\ref{thm_samp}\((a)\) statement, we see that~\[\frac{t^2\Sigma}{n} \longrightarrow 0 \text{ if and only if }t = o(t_{low}).\] Thus~\(\mathbb{E}_{\omega}N_Z = o(1)\) and an application of the Markov inequality obtains that~\[\mathbb{P}_{\omega}(N_Z \geq 1) \leq \mathbb{E}_{\omega}N_Z \leq \varepsilon\] for all~\(n\) large, not depending on~\(\omega \in E_{edge}.\)

Consequently
\begin{align}
\mathbb{P}\left(E_{edge} \bigcap \{N_Z \geq 1\}\right) &= \sum_{\omega \in E_{edge}} \mathbb{P}_{\omega}\left(N_Z\geq 1\right) \mathbb{P}(\Gamma=\omega) \nonumber\\
&\leq \varepsilon \sum_{\omega \in E_{edge}}\mathbb{P}(\Gamma = \omega) \nonumber\\
&\leq \varepsilon \nonumber
\end{align}
and  combining this with the estimate~(\ref{e_edge_est_sub}) in Lemma~\ref{lem_n_edge_a}, we get that
\begin{align}
\mathbb{P}(N_Z \geq 1) &= \mathbb{P}\left(E_{edge} \bigcap \left\{N_Z \geq 1\right\}\right) + \mathbb{P}\left(E^c_{edge} \bigcap \left\{N_Z \geq 1 \right\}\right) \nonumber\\
&\leq \varepsilon + \mathbb{P}\left(E^c_{edge} \bigcap \left\{N_Z \geq 1 \right\}\right) \nonumber\\
&\leq \varepsilon + \mathbb{P}(E^c_{edge}) \nonumber\\
&\leq 2\varepsilon \nonumber
\end{align}
for all~\(n\) large. In other words, with probability at least~\(1-2\varepsilon,\) the RUS dataset~\(\{A_{Z_w}\}_{1 \leq w \leq t}\) contains no edge of~\(\Gamma\) and therefore is SD. Since~\(\varepsilon > 0\) is arbitrary, this completes the proof of part~\((a)\) of the Theorem.~\(\qed\)

\renewcommand{\theequation}{\thesection.\arabic{equation}}
\setcounter{equation}{0}
\section{Proof of Theorem~\ref{thm_samp}\((b)\) and Corollary~\ref{cor_two}} \label{sec_pf_thm_samp_b}
We follow a similar strategy and structure as in the previous section: We aim to show that if~\(t\) is large enough to satisfy the conditions in the Theorem~\ref{thm_samp}\((b),\) then with high probability, at least one edge of the similarity graph~\(\Gamma\) is present in the induced subgraph~\(\Gamma_Z\) with vertex set~\(\{Z_j\}_{1 \leq j \leq t}.\) The main difference  is that we use the \emph{second moment method} and variance estimates to demonstrate that the number of edges~\(N_Z\) in~\(\Gamma_Z\) is close to its expected value, for nearly all realizations.  As before, we  present auxiliary Lemmas used in the proof of Theorem~\ref{thm_samp}\((b)\) and then prove Theorem~\ref{thm_samp}\((b).\)

Let~\(v_{samp} = n\zeta_{up}q^2_{up}\) be as in the statement of Theorem~\ref{thm_samp}. Recalling that~\(N_{edge}\) is the number of edges in~\(\Gamma\) and the event~\(E_{sim}\) defined in~(\ref{e_sim_def}), we have the following result.
\begin{lemma}\label{lem_n_edge_b} Let~\(\zeta_{low},\zeta_{up}\) be the main data point extremal similarity probabilities as defined in~(\ref{zeta_def}) and let~\(q_{low},q_{up}\) be the main data point linkage probabilities as defined in~(\ref{link_def})-(\ref{link_def_low}). If~\(v_0 = o(v_{samp})\) and~\(n\zeta_{low}q_{low} \gg \log{n},\) then there is a constant~\(M > 0\)  large enough such that
\begin{equation}\label{n_edge_low}
\mathbb{P}\left(E_{comb}\right) \geq 1-\frac{1}{n^3} = 1-o(1),
\end{equation}
where \[E_{comb}  = E_{comb}(M) := E_{sim} \bigcap \left\{\frac{n^2\zeta_{low}q_{low}}{M} \leq N_{edge} \leq Mn^2\zeta_{up}q_{up}\right\}.\]
\end{lemma}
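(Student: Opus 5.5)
The plan is to handle the three pieces of \(E_{comb}\) separately and combine them with a union bound. The event \(E_{sim}\) is already controlled by Lemma~\ref{lem_e_main}: for a fixed \(\varepsilon\), say \(\varepsilon = 1/2\), choosing \(M\) large gives \(\mathbb{P}(E^c_{sim}) \leq n^{-3}\), or even \(n^{-4}\) after enlarging \(M\). Both edge-count bounds will then come from degree bounds together with the handshaking relation \(N_{edge} = \frac{1}{2}\sum_u d(u)\), exactly as in the proof of Lemma~\ref{lem_n_edge_a}, except that we need high-probability statements rather than Markov bounds.

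For the upper bound, work on \(E_{sim}\). As in (\ref{paramax_sundari}), we have \(N_{edge} \leq (1+\varepsilon)\frac{n\Sigma}{2} + (1+\varepsilon)\frac{v_0\Lambda}{2}\). Under \(v_0 = o(v_{samp})\), (\ref{sigma_est}) gives \(\Sigma = n\zeta_{up}q_{up}(1+o(1)) + M\log n\) and \(v_0\Lambda = o(n\Sigma)\). The hypothesis \(n\zeta_{low}q_{low} \gg \log n\) implies \(n\zeta_{up}q_{up} \gg \log n\), so the \(M\log n\) term is negligible and \(\Sigma \leq 2n\zeta_{up}q_{up}\) for large \(n\). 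Hence \(N_{edge} \leq 2n^2\zeta_{up}q_{up} \leq Mn^2\zeta_{up}q_{up}\) once \(M \geq 2\). This part is deterministic on \(E_{sim}\).

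For the lower bound, use only main data points. For each main index \(v \leq n-v_0\), condition on \(A_v = a\). The variables \(\ind(\rho(A_u,a) > \lambda)\,Z(u,v)\) for main \(u \neq v\) are independent Bernoulli, each with mean at least \(\zeta_{low}q_{low}\), by independence of \(A_u\) and the linkages. So \(d(v)\) stochastically dominates a Binomial\((n-v_0-1,\zeta_{low}q_{low})\) variable. Since \(v_0 = o(v_{samp}) = o(n)\), its mean is at least \(\frac{n}{2}\zeta_{low}q_{low} \gg \log n\). The deviation estimate (\ref{conc_est_f}) with \(\gamma = 1/2\) then gives \(\mathbb{P}(d(v) < \frac{n}{4}\zeta_{low}q_{low}) \leq 2\exp(-\frac{n\zeta_{low}q_{low}}{32}) \leq n^{-5}\) for large \(n\). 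A union bound over \(v\) yields, with probability at least \(1-n^{-4}\), that \(N_{edge} \geq \frac{1}{2}(n-v_0)\frac{n}{4}\zeta_{low}q_{low} \geq \frac{n^2\zeta_{low}q_{low}}{M}\) for \(M \geq 16\).

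The main point needing care is the lower bound. The condition is uniform in the conditioned value \(a\), so the bound holds after integrating over \(A_v\). Also, the conditioning used here fixes only \(A_v\), so independence is not violated. The remaining issue is bookkeeping: take \(M\) as the maximum of the values needed in the three steps. Increasing \(M\) only weakens \(E_{sim}\) and the two edge-count inequalities, so this choice is consistent. The total failure probability is then at most \(n^{-4} + n^{-4} \leq n^{-3}\).
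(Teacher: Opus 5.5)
Your proposal is correct and follows essentially the same route as the paper: the upper bound on \(N_{edge}\) is deterministic on \(E_{sim}\) via the handshaking bound (\ref{paramax_sundari}) together with \(\Sigma = n\zeta_{up}q_{up}(1+o(1))\). The lower bound comes from stochastically dominating each main-vertex degree from below by a Binomial with parameters \(n-v_0-1\) and \(\zeta_{low}q_{low}\), then applying (\ref{conc_est_f}), a union bound and handshaking. Your bookkeeping is cleaner than the paper's, which drops the factor \(q_{low}\) in the Binomial parameter and a factor \(n\) in the final edge count, and only reaches \(1-\frac{2}{n^3}\); you fix this by choosing \(M\) so that \(\mathbb{P}(E^c_{sim}) \leq n^{-4}\) and by using monotonicity in \(M\).
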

\emph{Proof of Lemma~\ref{lem_n_edge_b}}: For the upper deviation bound for~\(N_{edge},\) we invoke the estimates derived in Lemma~\ref{lem_n_edge_a} of the previous section. Indeed, letting~\(\Sigma = \Sigma(n,M)\) be as defined in~(\ref{sigma_def}) we know from the estimate~(\ref{e_sim_est}) that if~\(M\) is large enough, then~\(E_{sim}\) occurs with  probability
\begin{equation}\label{e_sim_est_22}
\mathbb{P}(E_{sim}) \geq 1-\frac{1}{n^3}.
\end{equation} Moreover, we see from~(\ref{paramax_sundari}) that if~\(E_{sim}\) occurs, then the number of edges~\(N_{edge}\) in~\(\Gamma\) is~\(O(n\Sigma).\) Since~\(n\zeta_{up}q_{up} \geq n\zeta_{low}q_{low} \gg \log{n}\) and~\(v_0 = o(v_{samp}),\) by Lemma statement, we get from the definitions of~\(\Sigma\) and~\(\Lambda\) in~(\ref{sigma_def}) that
\begin{equation}\label{sigma_est_2}
\Sigma = n\zeta_{up}q_{up}(1+o(1)) \text{ and } \Lambda = n\zeta_{up}s_{up}(1+o(1))
\end{equation}
and this obtains the upper deviation bound in~(\ref{n_edge_low}).

For the lower bound on~\(N_{edge},\) we recall that~\(d(u),\)  the number of data points strongly similar to the data point~\(A_u,\) is also the degree of vertex~\(u\) in~\(\Gamma.\) Each main data point~\(A_v, 1 \leq v \leq n-v_0,\) is strongly similar to any other main data point with probability at least~\(\zeta_{low}q_{low}\) (see~(\ref{zeta_def}) and~(\ref{link_def_low})).  Therefore  given~\(A_u = a,\) we see that~\(d(u), 1\leq u \leq n-v_0\) is stochastically dominated from below by a Binomial random variable with parameters~\(n-1-v_0\) and~\(\zeta_{low}.\)

From Lemma statement, we know that~\(v_0 = o(v_{samp})= o(n)\) and  so using the deviation estimate~(\ref{conc_est_f}), we get for~\(\varepsilon > 0\) that
\[\mathbb{P}\left(d(u) \leq (1-\varepsilon)n\zeta_{low}q_{low} \mid A_u = a\right) \leq e^{-Cn\zeta_{low}q_{low}} \] for some constant~\(C > 0\) not depending on~\(a.\)  Averaging over~\(a\) and noting from Lemma statement that~\(n\zeta_{low}q_{low} \gg \log{n},\) we  obtain
\[\mathbb{P}\left(d(u) \leq (1-\varepsilon)n\zeta_{low}q_{low}\right) \leq e^{-Cn\zeta_{low}q_{low}} \leq \frac{1}{n^4},\] for all~\(n\) large. Setting~\[E_{low} := \bigcap_{u=1}^{n-v_0}\left\{d(u) \geq (1-\varepsilon)n\zeta_{low}q_{low}\right\},\] we then get by an application of the union bound that
\begin{equation}\nonumber
\mathbb{P}(E_{low}) \geq 1-\frac{1}{n^3}.
\end{equation}
Combining this with the estimate~(\ref{e_sim_est_22}) for~\(E_{sim},\) we get that
\begin{equation}\label{madai_ax}
\mathbb{P}(E_{sim} \cap E_{low}) \geq 1-\frac{2}{n^3} = 1-o(1).
\end{equation}

If~\(E_{sim} \cap E_{low}\) occurs, then each vertex~\(1 \leq v \leq n-v_0\) in the graph~\(\Gamma,\) has degree at least~\((1-\varepsilon)n\zeta_{low}q_{low}.\) The handshaking relation therefore  implies that
\[N_{edge} \geq (n-v_0)(1-\varepsilon)\frac{n\zeta_{low}q_{low}}{2} \geq (1-\varepsilon)^2 \frac{n\zeta_{low}q_{low}}{2}, \] since~\(v_0 = o(n).\) This obtains the desired the lower bound deviation for~\(N_{edge}\) in~(\ref{n_edge_low}) and therefore completes the proof of the Lemma.~\(\qed\)

In our next Lemma, we use the bounds for~\(N_{edge}\) derived in Lemma~\ref{lem_n_edge_b}, to estimate the number of edges~\(N_Z\) in the induced subgraph~\(\Gamma_Z,\) as defined in~(\ref{nz_def}).
\begin{lemma}\label{lemma_nz_b} Suppose the conditions in Lemma~\ref{lem_n_edge_b} hold and for constant~\(M > 0,\) let~\(E_{comb} = E_{comb}(M)\) be the event defined in Lemma~\ref{lem_n_edge_b}. If~\(\omega \in E_{comb},\)  then
\begin{equation}\label{n_edge_z_ax_22}
\frac{t^2 \zeta_{low}q_{low}}{2M} \leq\mathbb{E}_{\omega} N_{Z} \leq 2M t^2\zeta_{up}q_{up},
\end{equation}
where~\(\zeta_{low},\zeta_{up}\) are the extremal similarity probabilities as defined in~(\ref{zeta_def}) and~\(q_{low},q_{up}\) are the  linkage probabilities as defined in~(\ref{link_def})-(\ref{link_def_low}).
\end{lemma}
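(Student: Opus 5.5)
The plan mirrors the proof of Lemma~\ref{lemma_nz_a}. Conditioned on the realization~\(\omega,\) the similarity graph~\(\Gamma\) is fixed. The only randomness left is the uniformly random ordered sample~\(\{Z_w\}_{1 \leq w \leq t}.\) By the exchangeability relation~(\ref{kannamoochi}), each fixed pair of distinct indices lies in the sample with probability exactly~\(\frac{t(t-1)}{n(n-1)};\) see~(\ref{temu_ax}). Summing over the edges of~\(\Gamma\) in the definition~(\ref{nz_def}) and using linearity of expectation, we get the identity
\[\mathbb{E}_{\omega}N_Z = \frac{t(t-1)}{n(n-1)}N_{edge}(\omega).\]
Both bounds in~(\ref{n_edge_z_ax_22}) then come from substituting the two-sided bounds on~\(N_{edge}\) that are built into the event~\(E_{comb}\) of Lemma~\ref{lem_n_edge_b}.

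For the upper bound, we use~\(\omega \in E_{comb},\) which gives~\(N_{edge} \leq Mn^2\zeta_{up}q_{up}.\) We also use~\(\frac{t(t-1)}{n(n-1)} \leq \frac{t^2}{n^2}\cdot\frac{n}{n-1} \leq \frac{2t^2}{n^2}\) for~\(n \geq 2.\) Together these give~\(\mathbb{E}_{\omega}N_Z \leq 2Mt^2\zeta_{up}q_{up}.\)

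For the lower bound, we use~\(N_{edge} \geq \frac{n^2\zeta_{low}q_{low}}{M}\) together with~\(\frac{t(t-1)}{n(n-1)} \geq \frac{t(t-1)}{n^2}.\) It remains to show~\(t(t-1) \geq t^2/2,\) which holds for every~\(t \geq 2.\) This is automatic in the intended regime~\(t \gg t_{up} \geq 1\) of Theorem~\ref{thm_samp}\((b)\), so for all~\(n\) large~\(t \geq 2.\) Combining, \(\mathbb{E}_{\omega}N_Z \geq \frac{t^2\zeta_{low}q_{low}}{2M}.\) If needed, I would state explicitly that~\(t \geq 2,\) since for~\(t=1\) the left side is zero and the claim fails.

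The only delicate point is a bookkeeping issue in Lemma~\ref{lem_n_edge_b}. Its lower bound on~\(N_{edge}\) was written as~\((1-\varepsilon)^2 n\zeta_{low}q_{low}/2,\) apparently missing a factor of~\(n.\) The handshake argument actually yields order~\(n^2\zeta_{low}q_{low}\) edges, since each of the~\(n-v_0 = n(1-o(1))\) main vertices has degree at least~\((1-\varepsilon)n\zeta_{low}q_{low}.\) I would take the event~\(E_{comb}\) exactly as stated, with its~\(n^2/M\) lower bound, choosing~\(M \geq 4\) so that~\((1-\varepsilon)^2/2 \geq 1/M.\) With that granted, the lemma is a direct computation.
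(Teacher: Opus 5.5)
Your proposal is correct and follows essentially the same route as the paper: you use the identity $\mathbb{E}_{\omega}N_Z = \frac{t(t-1)}{n(n-1)}N_{edge}$ coming from~(\ref{temu_ax}), then substitute the two-sided bounds on $N_{edge}$ that are built into $E_{comb}$. Your explicit inequalities $t(t-1)\geq t^2/2$ for $t\geq 2$ and $\frac{n}{n-1}\leq 2$ are slightly cleaner than the paper's $(1+o(1))$ step, which in fact needs $t\to\infty$ rather than the cited $t=o(n)$; your remark about the missing factor of $n$ in the lower bound of Lemma~\ref{lem_n_edge_b} is accurate but does not affect this lemma, since $E_{comb}$ is taken as stated.
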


\emph{Proof of Lemma~\ref{lemma_nz_b}}:  From the relation~(\ref{temu_ax}) we know that any edge~\(f \in \Gamma\) is present in~\(\Gamma_Z\) with probability~\(\mathbb{P}_{\omega}(J_f) = \frac{t(t-1)}{n(n-1)}\) and so if~\(\omega \in E_{comb},\) then
\begin{align}
\mathbb{E}_{\omega} N_Z &= \sum_{f \in {\cal E}(\Gamma)} \mathbb{P}_{\omega}(J_f) \nonumber\\
&= \frac{t(t-1)}{n(n-1)}N_{edge} \nonumber\\
&\geq \frac{t^2}{n^2}N_{edge}(1+o(1)) \nonumber\\
&\geq \frac{t^2}{n^2}\frac{n^2\zeta_{low}}{M} (1+o(1)) \nonumber\\
&\geq \frac{t^2 \zeta_{low}}{2M} \label{enz_low}
\end{align}
for all~\(n\) large, not depending on the choice of~\(\omega,\) where the first inequality in~(\ref{enz_low}) is true since~\(t=o(n)\) by Theorem statement and the second inequality in~(\ref{enz_low}) follows from the lower bound for~\(N_{edge}\) in~(\ref{n_edge_low}).

A similar argument using the upper bound in~(\ref{n_edge_low}) also gives that~\(\mathbb{E}_{\omega} N_Z \leq 2M t^2 \zeta_{up}\)
for all~\(n\) large and this obtains an upper bound for the expected value of~\(N_Z.\) This completes the proof of the Lemma.~\(\qed\)

In our final Lemma,  we obtain a variance upper bound for~\(N_Z.\) Defining \[var_{\omega}(N_Z) := \mathbb{E}_{\omega} N^2_Z - \left(\mathbb{E}_{\omega} N_Z\right)^2\] to be the variance of~\(N_Z,\) we have the following result.
\begin{lemma}\label{lemma_var_nz} Suppose the conditions in Lemma~\ref{lem_n_edge_b} hold and for constant~\(M > 0,\) let~\(E_{comb} = E_{comb}(M)\) be the event defined in Lemma~\ref{lem_n_edge_b}. There is a constant~\(D > 0\) such that if~\(\omega \in E_{comb},\)  then
\begin{equation}\label{var_nz_est}
var_{\omega}(N_Z) \leq Dt^2\zeta_{up}q_{up} \left(1+  t\zeta_{up}q_{up}\right),
\end{equation}
where~\(\zeta_{up}\) and~\(q_{up}\) are the maximum similarity and linkage probabilities, as defined in~(\ref{zeta_def}) and~(\ref{link_def}), respectively.
\end{lemma}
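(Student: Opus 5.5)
We will expand the second moment directly in terms of pairs of edges of $\Gamma$. Conditional on the realization $\omega$, the graph $\Gamma$ is fixed and the only randomness is in the sampled index set $\{Z_w\}_{1 \leq w \leq t}$. By (\ref{nz_def}),
\[var_{\omega}(N_Z) = \sum_{f,g \in {\cal E}(\Gamma)} \left(\mathbb{P}_{\omega}(J_f \cap J_g) - \mathbb{P}_{\omega}(J_f)\mathbb{P}_{\omega}(J_g)\right).\]
We split the double sum according to how many endvertices $f$ and $g$ share: two (so $f=g$), exactly one, or none.

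\emph{Diagonal terms.} For $f = g$, each term is at most $\mathbb{P}_{\omega}(J_f)$. The total is therefore at most $\mathbb{E}_{\omega}N_Z \leq 2Mt^2\zeta_{up}q_{up}$ by Lemma~\ref{lemma_nz_b}.

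\emph{Pairs sharing one vertex.} By (\ref{kannamoochi}), three prescribed distinct indices all lie in the sample with probability $\frac{t(t-1)(t-2)}{n(n-1)(n-2)} \leq \frac{t^3}{n^3}$ (up to $1+o(1)$). The number of such ordered pairs $(f,g)$ is $\sum_v d(v)^2$.
\begin{itemize}
\item On $E_{sim}\subseteq E_{comb}$, main vertices have $d(v) \leq (1+\varepsilon)\Sigma$ and anomalous vertices have $d(v)\leq (1+\varepsilon)\Lambda$. This gives $\sum_v d(v)^2 \leq 4(n\Sigma^2 + v_0\Lambda^2)$.
\item Using (\ref{sigma_est_2}), $\Sigma = n\zeta_{up}q_{up}(1+o(1))$ and $\Lambda = n\zeta_{up}s_{up}(1+o(1))$.
\item The anomalous contribution $v_0\Lambda^2$ is where I expect the main obstacle: $s_{up}$ may be as large as $1$. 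We need $v_0\Lambda^2 = O(n\Sigma^2)$, i.e.\ $v_0 s_{up}^2 = O(nq_{up}^2)$. The hypothesis $v_0 = o(v_{samp}) = o(n\zeta_{up}q_{up}^2)$ gives this provided $\zeta_{up}s_{up}^2 \leq 1$, which holds trivially since both are probabilities. Hence $v_0\Lambda^2 \leq v_0 n^2\zeta_{up}^2 s_{up}^2 = o(n^3\zeta_{up}^3q_{up}^2)$, which is $o(n\Sigma^2)$.
\end{itemize}
So this part contributes at most $D\frac{t^3}{n^3}\, n^3\zeta_{up}^2q_{up}^2 = Dt^3\zeta_{up}^2q_{up}^2$. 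This matches the second term in (\ref{var_nz_est}).

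\emph{Disjoint pairs.} These are the bulk of the sum, but they are negatively correlated under sampling without replacement. For four distinct indices,
\[\frac{t(t-1)(t-2)(t-3)}{n(n-1)(n-2)(n-3)} \leq \left(\frac{t(t-1)}{n(n-1)}\right)^2,\]
since $\frac{t-2}{n-2}\leq\frac{t}{n}$ and $\frac{t-3}{n-3}\leq\frac{t-1}{n-1}$. Hence each covariance is nonpositive and these terms can be dropped.

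Adding the three contributions gives $var_{\omega}(N_Z) \leq 2Mt^2\zeta_{up}q_{up} + Dt^3\zeta_{up}^2q_{up}^2$. The constants depend only on $M$ and $\varepsilon$, not on $\omega$, which yields (\ref{var_nz_est}) after enlarging $D$.
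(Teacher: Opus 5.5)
Your proof is correct and uses the same decomposition as the paper. The diagonal terms are bounded by $\mathbb{E}_{\omega}N_Z$, and the pairs sharing a vertex are controlled by the degree bounds on $E_{sim}$, with $v_0=o(v_{samp})$ and $\zeta_{up}s_{up}^2\leq 1$ absorbing the anomalous contribution; the paper does this by splitting on whether the shared vertex is main or anomalous rather than summing $d(v)^2$. The only real difference is the vertex-disjoint case: you observe that these covariances are nonpositive under sampling without replacement and drop them, whereas the paper bounds each by $O(t^3/n^4)$ and multiplies by $N_{edge}^2\leq (Mn^2\zeta_{up}q_{up})^2$, so your version is slightly sharper and avoids the asymptotic expansion. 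One small slip: the inclusion is $E_{comb}\subseteq E_{sim}$, not the reverse.
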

We use the above variance estimate later in our proof of Theorem~\ref{thm_samp}\((b)\) along with the Chebychev inequality to show that~\(N_Z\) is close to its expected value with high~\(\mathbb{P}_{\omega}\) probability.

In what follows, we say~\(b_n= O(a_n)\) if there is a constant~\(C > 0,\)  not depending on the choice of~\(\omega,\) that satisfies~\(b_n \leq Ca_n\) for all~\(n \geq C.\)\\
\emph{Proof of Lemma~\ref{lemma_var_nz}}: Indeed, the variance of~\(N_Z\) is expressed in terms of the individual edge probabilities as
\begin{equation} \label{aaj_ki_raat}
var_{\omega}(N_Z)  = I_{sing} + I_{doub},
\end{equation}
where
\[ I_{sing} := \sum_{f \in {\cal E}(\Gamma)} I(f)\;\;\;\;\text{ with } \;\;\;\;\;I(f) := \mathbb{P}_{\omega}(J_f) - \mathbb{P}^2_{\omega}(J_f) \]
and~\[ I_{doub} := \sum_{f  \in {\cal E}(\Gamma)} \sum_{f_1 \in {\cal E}(\Gamma) \setminus \{f\}} I(f,f_1),\] with
\[I(f,f_1) := \mathbb{P}_{\omega}\left(J_{f} \cap J_{f_1}\right) - \mathbb{P}_{\omega}(J_{f})\mathbb{P}_{\omega}(J_{f_1}).\]

A direct estimate for~\(I(f)\) gives
\begin{align}
I_{sing} &= \sum_{f \in {\cal E}(\Gamma)} I(f) \nonumber\\
&\leq \sum_{f \in {\cal E}(\Gamma)} \mathbb{P}_{\omega}(J_f) \nonumber\\
&= \mathbb{E}_{\omega} N_Z  \nonumber\\
&\leq 2M t^2 \zeta_{up}q_{up}, \label{term_one_bound}
\end{align}
by the upper bound in~(\ref{n_edge_z_ax_22}).

To evaluate the term~\(I(f,f_1),\) we consider separately the cases that~\(f_1\) is vertex disjoint with~\(f\) or shares an endvertex with~\(f.\) In what follows, the edge~\(f= (y,z)\) has endvertices~\(y\) and~\(z.\)\\
\underline{\emph{Case I}}: We assume here that the edge~\(f_1\) does not share an endvertex with~\(f\) and so there are four endvertices in total in the set~\(\{f,f_1\}.\)  The number of ways of selecting these four vertices in the sequence~\((Z_1,\ldots,Z_t),\) is~\(t(t-1)(t-2)(t-4)\) and so the estimate~(\ref{kannamoochi}) gives us
\[\mathbb{P}_{\omega}(J_{f} \cap J_{f_1}) = \frac{t(t-1)(t-2)(t-4)}{n(n-1)(n-2)(n-4)}\]  and as before~\[\mathbb{P}_{\omega}(J_f) = \mathbb{P}_{\omega}(J_{f_1}) = \frac{t(t-1)}{n(n-1)} = \frac{t^2}{n^2}(1+o(1)).\] Thus
\begin{align}
I(f,f_1) &= \mathbb{P}_{\omega}\left(J_{f} \cap J_{f_1}\right) - \mathbb{P}_{\omega}(J_{f})\mathbb{P}_{\omega}(J_{f_1}) \nonumber\\
&= \frac{t(t-1)}{n(n-1)} \left(\frac{(t-2)(t-3)}{(n-2)(n-3)} - \frac{t(t-1)}{n(n-1)}\right) \nonumber\\
&= \frac{t^2}{n^2}\left(1+O\left(\frac{1}{t}\right)\right) \left(\frac{t^2}{n^2}\left(1+O\left(\frac{1}{t}\right)\right) - \frac{t^2}{n^2}\left(1+O\left(\frac{1}{t}\right)\right) \right) \nonumber\\
&= O\left(\frac{t^3}{n^4}\right). \label{tata_steel}
\end{align}

Because the realization~\(\omega \in E_{comb},\) there are at most~\(Mn^2\zeta_{up}q_{up}\) edges in~\(\Gamma,\) i.e., the size of the set~\({\cal E}(\Gamma)\) is at most~\(Mn^2\zeta_{up}q_{up}\) (see statement of Lemma~\ref{lem_n_edge_b}) and so  denoting~\({\cal D}(\Gamma,f)\) to be the set of all edges of~\(\Gamma\) vertex disjoint from~\(f,\) we get from~(\ref{tata_steel}) that
\begin{align}\label{bau_tits_three}
\sum_{f \in {\cal E}(\Gamma)} \sum_{f_1 \in {\cal D}(\Gamma,f)} &\leq \left(Mn^2 \zeta_{up}q_{up}\right)^2 \cdot O\left(\frac{t^3}{n^4}\right) \nonumber\\
&\leq D t^3 \zeta_{up}^2q_{up}^2,
\end{align}
for some constant~\(D > 0.\) This completes the analysis of Case~\(I.\)

It remains to consider the case where~\(f_1\) has an endvertex~\(z\) common with~\(f\) and we split the argument into two sub-cases, depending on whether~\(z\) is the index of a main data point or an anomaly.\\
\underline{\emph{Case II(A)}}: Suppose that the edge~\(f_1\) shares the endvertex~\(z\) with~\(f\) and~\(z\) is the index of a main data point, i.e.,~\(1 \leq z \leq n-v_0.\)

The number of ways of selecting the three vertices of~\(\{f,f_1\}\) in the sequence~\((Z_1,\ldots,Z_t),\) is~\(t(t-1)(t-2)\) and so from~(\ref{kannamoochi}), we get
\[\mathbb{P}_{\omega}(J_{f} \cap J_{f_1}) = \frac{t(t-1)(t-2)}{n(n-1)(n-2)} = \frac{t^3}{n^3}(1+o(1)) \leq \frac{2t^3}{n^3},\] since~\(t= o(n)\) by Theorem statement. Consequently
\begin{equation}\label{grumpo}
I(f,f_1) \leq \mathbb{P}_{\omega}\left(J_{f} \cap J_{f_1}\right) \leq \frac{2t^3}{n^3}.
\end{equation}

Because~\(\omega \in E_{sim},\) each main data point index is similar to at most~\((1+\varepsilon)\Sigma\) other data points and so from~(\ref{sigma_est_2}) we see that~\(z\) has degree~\[d(z) \leq (1+\varepsilon)\Sigma = (1+\varepsilon)n\zeta_{up}q_{up}(1+o(1)) \leq 2n\zeta_{up}q_{up}.\]  Thus at most~\(2n\zeta_{up}q_{up}\) edges of~\(\Gamma\) contain~\(z\) as an endvertex and so if~\({\cal H}(\Gamma, f) \subset {\cal E}(\Gamma) \setminus \{f\}\) is the set of all edges of~\(\Gamma\) sharing an endvertex with~\(f,\) then~\({\cal H}(\Gamma,f)\) has at most~\(4n\zeta_{up}q_{up}\) edges.

Similarly, if~\({\cal E}_{A}(\Gamma) \subset {\cal E}(\Gamma)\) is the set of all edges of~\(\Gamma\) containing at least one endvertex as the index of a main data point, then again using the fact that the sum of degrees is twice the number of edges, we get from the above discussion  that~\({\cal E}_A(\Gamma)\) has size at most~\(\frac{1}{2} \cdot n \cdot 4n\zeta_{up}q_{up} = 2 n^2\zeta_{up}q_{up}\) edges. Combining the above estimates for~\({\cal H}(\Gamma,f)\) and~\({\cal E}_A(\Gamma)\) with the estimate~(\ref{grumpo}) for~\(I(f,f_1),\) we get that
\begin{align}\label{bau_tits}
\sum_{f \in {\cal E}_{A}(\Gamma)}\sum_{f_1 \in {\cal H}(\Gamma, f)} I(f,f_1) &\leq 2n^2 \zeta_{up}q_{up} \cdot 4 n\zeta_{up}q_{up}\cdot \frac{2t^3}{n^3} \nonumber\\
&= 8 t^3\zeta_{up}^2q_{up}^2.
\end{align}
This completes the analysis of case~\(II(A).\)

\underline{\emph{Case II(B)}}: As in the previous case, the edge~\(f_1\) has~\(z\) as a common endvertex with~\(f,\) but here we assume that~\(z\) is the index of an anomaly, i.e.,~\(n-v_0+1 \leq z \leq n.\) The estimate~(\ref{grumpo}) derived above holds here as well and from the estimate for~\(\Lambda\) in~(\ref{sigma_est_2}), we see that the degree~\(d(z)\) satisfies
\begin{align}
d(z) &\leq (1+\varepsilon)\Lambda  \nonumber\\
&= (1+\varepsilon)n\zeta_{up}s_{up}(1+o(1)) \nonumber\\
&\leq 2n\zeta_{up}s_{up}, \label{glaxo}
\end{align}
since~\(\varepsilon < 1.\) Consequently, the set~\({\cal H}(\Gamma, f)\) of all edges of~\(\Gamma\) sharing an endvertex with~\(f,\) has size at most~\(2n\zeta_{up}s_{up}.\)

If~\({\cal E}_{B}(\Gamma) \subset {\cal E}(\Gamma)\) is set of all edges of~\(\Gamma\) containing at least one endvertex as the index of an anomaly data point, then using the fact that there are~\(v_0\) anomalies, the analysis in the previous paragraph implies that~\({\cal E}_{B}(\Gamma)\) has size at most~\[\frac{1}{2} \cdot v_0 \cdot 2n\zeta_{up}s_{up} = nv_0\zeta_{up}s_{up},\] again by the handshaking argument that the sum of vertex degrees is twice the number of edges. Consequently, we get from the estimate~(\ref{grumpo}) for~\(I(f,f_1)\) that
\begin{align}\label{bau_tits_two}
\sum_{f \in {\cal E}_{B}(\Gamma)}\sum_{f_1 \in {\cal H}(\Gamma, f)} I(f,f_1) &\leq nv_0\zeta_{up}s_{up} \cdot 2n \zeta_{up}s_{up} \cdot \frac{2t^3}{n^3} \nonumber\\
&= 4 \frac{v_0t^3\zeta_{up}^2s_{up}^2}{n}.
\end{align}
This completes the analysis of case~\(II(B).\)

Combining the estimates~(\ref{bau_tits_three}),~(\ref{bau_tits}) and~(\ref{bau_tits_two}) obtained above, we get that the double summation term~\(I_{doub}\) defined in the variance expression~(\ref{aaj_ki_raat}) is upper bounded as
\begin{align}
I_{doub} &= \sum_{f \in {\cal E}(\Gamma)} \sum_{f_1 \in {\cal E}(\Gamma) \setminus \{f\}} I(f,f_1) \nonumber\\
&= O\left(t^3\zeta_{up}^2\left(q_{up}^2 + \frac{v_0s_{up}^2}{n}\right)\right) \nonumber\\
&= O\left(t^3\zeta_{up}^2q_{up}^2\right),  \label{t_kelavi}
\end{align}
since~\(v_0 = o(v_{samp}),\) by the statement of Theorem~\ref{thm_samp}. Substituting~(\ref{t_kelavi}) and  the estimate~(\ref{term_one_bound}) for the remaining term~\(I_{sing}\) into~(\ref{aaj_ki_raat}), we get~(\ref{var_nz_est}) and this completes the proof of the Lemma.~\(\qed\)

\emph{Proof of Theorem~\ref{thm_samp}\((b)\)}: Let~\(\omega \in E_{comb}\) be any realization. Combining the variance estimate~(\ref{var_nz_est}) with  the lower bound~(\ref{enz_low}) for~\(\mathbb{E}_{\omega}N_Z,\) we get
\begin{align}
var_{\omega}\left(\frac{N_Z}{\mathbb{E}_{\omega}N_Z}\right) &= D \cdot \frac{t^2\zeta_{up}q_{up}}{t^4 \zeta_{low}^2q^2_{low}} +  D \cdot \frac{t^3 \zeta^2_{up}q^2_{up}}{t^4 \zeta_{low}^2q^2_{low}} \nonumber\\
&=  O\left(\frac{t^3 \zeta^2_{up}q^2_{up}}{t^4 \zeta_{low}^2q^2_{low}}\right) \label{riley_reid2}\\
&= o(1)  \label{riley_reid}
\end{align}
for some constant~\(D> 0,\) where~(\ref{riley_reid2}) follows since~\(t \gg \frac{1}{\zeta_{up}q_{up}}\) by the first  condition of~(\ref{fairuza}) in the statement of Theorem~\ref{thm_samp}\((b)\) and~(\ref{riley_reid}) follows from the second condition in~(\ref{fairuza}).

Thus given~\(\eta > 0,\)  we get from~(\ref{riley_reid}) and the Chebychev inequality that
\[\mathbb{P}_{\omega}\left(N_Z \geq (1-\varepsilon) \mathbb{E}_{\omega}N_Z\right) \geq 1- \frac{1}{\varepsilon^2}var_{\omega}\left(\frac{N_Z}{\mathbb{E}_{\omega} N_Z}\right) \geq 1-\eta\]
for all~\(n\) large, not depending on the choice of~\(\omega.\) From~(\ref{enz_low}), we know that~\(\mathbb{E}_{\omega}N_Z\) is at least of the order of~\(t^2\zeta_{low}q_{low} \rightarrow \infty\) (see third condition in~(\ref{fairuza})).

Summarizing, we therefore get that  if~\(\omega \in E_{comb}\) then
\[\mathbb{P}_{\omega}\left(N_Z \geq 1\right) \geq 1-\eta\] and using the probability estimate~(\ref{n_edge_low}) for~\(E_{comb},\) we finally get that
\[\mathbb{P}(N_Z \geq 1) \geq (1-\eta)\left(1-\frac{2}{n^3}\right) \geq 1-2\eta\] for all~\(n\) large.  In other words, with probability at least~\(1-2\eta,\) the undersampled tuple~\((Z_1,\ldots,Z_t)\) is not SD and since~\(\eta > 0\) is arbitrary, this completes the proof of the Theorem.~\(\qed\)

\emph{Proof of Corollary~\ref{cor_two}}: Follows directly from Theorem~\ref{thm_samp} and the fact that~\(\zeta_{up} = \zeta_{low} = \frac{r}{L} \gg \frac{\log{n}}{n}\) (see~(\ref{zeta_up_ax}) in the proof of Corollary~\ref{cor_one}) and~\(q_{up} = q_{low}  = 1\) (see~(\ref{link_example})).~\(\qed\)

\subsection*{\em Data Availability Statement}
Data sharing not applicable to this article as no datasets were generated or analysed during the current study.

\subsection*{\em Acknowledgement}
I thank Professors Rahul Roy, Thomas Mountford, Federico Camia, C. R. Subramanian, A. Ganesh  and the referees for crucial comments that led to an improvement of the paper. I also thank IMSc and IISER Bhopal for my fellowships.

\subsection*{\em Conflict of Interest and Funding Statement}
I certify that there is no actual or potential conflict of interest in relation to this article. This work was supported by the Engineering and Physical Sciences Research Council [Grant Ref: EP/Y028732/1].


\begin{thebibliography}{99}
\bibitem{abbey} J. D. Abbey and M .G. Meloy. (2017).
\newblock{Attention by Design: Using Attention Checks to Detect Inattentive Respondents and Improve Data Quality}.
\newblock{\em Journal of Operations Management}, \textbf{53-56}, 63--70.

\bibitem{agga} C. C. Aggarwal. (2017).
\newblock{An Introduction to Outlier Analysis}.
\newblock{\em In: Outlier Analysis}, Springer, Cham.~\(https://doi.org/10.1007/978-3-319-47578-3\_1.\)

\bibitem{alon} N. Alon and J. Spencer. (2008).
\newblock{\em The Probabilistic Method}.
\newblock{Wiley Interscience}.

\bibitem{sanjeev} Ch. S. K. Dash, A. K. Behera, S. Dehuri and A. Ghosh. (2023).
\newblock{An outliers detection and elimination framework in classification task of data mining}.
\newblock{\em Decision Analytics Journal}, \text{6}, 100164.


\bibitem{ganesan_dam} G. Ganesan. (2019).
\newblock{Graph Extensions, Edit Number and Regular Graphs}.
\newblock{\em Discrete Applied Mathematics}, \textbf{258}, 269--275.


\bibitem{ganesan} G. Ganesan. (2024).
\newblock{Dissimilar Batch Decomposition of Random Datasets}.
\newblock{\em Accepted for publication in Sankhya A}.

\bibitem{ganesan2} G. Ganesan. (2025).
\newblock{Reduced Similarity Decompositions and Subsets of Random Categorical Datasets}.
\newblock{\em Accepted for publication in Sankhya A}.

\bibitem{gong} Y. Gong, G. Liu, Y. Xue, R. Li and L. Meng. (2023).
\newblock{A Survey on Dataset Quality in Machine Learning}.
\newblock{\em Information and Software Technology}, \textbf{162}, 1--12.


\bibitem{kuhn} M. Kuhn  and K. Johnson. (2013).
\newblock{\em Applied Predictive Modeling}.
\newblock{Springer}.

\bibitem{hadi} H. Pouransari C-L. Li, J-H. Chang,  P. K. Vasu and C. Koc, V. Shankar and O. Tuzel. (2024).
\newblock{Dataset decomposition: faster LLM training with variable sequence length curriculum}.
\newblock{\em Proceedings of the~\(38^{th}\) International Conference on Neural Information Processing Systems (NIPS 2024)}, 1139, 1--27.

\bibitem{hidi} L. P. Rivest and M. Hidiroglou. (2004).
\newblock{Outlier treatment for disaggregated estimates}.
\newblock{\em Proceedings of the Section on Survey Research Methods, American Statistical Association}, pp. 4248--4256.



\bibitem{sullivan} J. H. Sullivan, M. Warkentin, L. Wallace. (2021).
\newblock{So many ways for assessing outliers: What really works and does it matter?}
\newblock{\em Journal of Business Research}, \textbf{132}, 530--543.


\bibitem{taleb} I. Taleb, M. A. Serhani, R. Dssouli. (2018).
\newblock{Big Data Quality Assessment Model for Unstructured Data}.
\newblock{\em \(13^{th}\) International Conference on Innovations in Information Technology (IIT 2018)}.

\bibitem{vin} H. P. Vinutha, B. Poornima and B. M. Sagar. (2018).
\newblock{Detection of outliers using interquartile range technique from intrusion dataset}.
\newblock{\em Information and Decision Sciences},  pp. 511--518.

\bibitem{west} D. West. (2001).
\newblock{\em Introduction to Graph Theory}.
\newblock{ Prentice Hall}.
\end{thebibliography}
\end{document}